\newcounter{ass-counter}
\newcounter{thm-counter}
\newcounter{remark-counter}
\newtheorem{theorem}[thm-counter]{Theorem}
\newtheorem{lemma}[thm-counter]{Lemma}
\newtheorem{assumption}[ass-counter]{Assumption}
\def\ouralg{\texttt{PASTO}}
\def\E{\mathbb E}
\def\bp{\mathbf p}
\def\g{g}
\def\bg{\bm{g}}
\def\bp{\bm{p}}
\def\xi{\bm{xi}}
\def\E{\mathbb E}
\def\hmu{\hat{\bm{\mu}}}
\def\bmu{\bm{\mu}}
\def\bmu{\bm{\mu}}
\def\hg{{g}}
\newcommand\numberthis{\addtocounter{equation}{1}\tag{\theequation}}
\def\E{\mathbb E}
\def\bp{\mathbf p}
\author[1]{{Weicong Ding}\thanks{weicongding@kuaishou.com}~}
\author[2]{{Hanlin Tang}\thanks{tanghl1994@gmail.com}}
\author[3]{{JingShuo Feng}\thanks{jingsf@uw.edu}}
\author[1]{Lei Yuan}
\author[1]{Sen Yang}
\author[1]{Guangxu Yang}
\author[1]{Jie Zheng}
\author[1]{Jing Wang}
\author[1]{Qiang Su}
\author[1]{Dong Zheng}
\author[1]{Xuezhong Qiu}
\author[1]{Yongqi Liu}
\author[1]{Yuxuan Chen}
\author[1]{Yang Liu}
\author[1]{Chao Song}
\author[1]{Dongying Kong}
\author[1]{Kai Ren}
\author[1]{Peng Jiang}
\author[1]{Qiao Lian}
\author[1]{{Ji Liu}\thanks{ji.liu.uwisc@gmail.com}}
\affil[1]{Kuaishou Technology}
\affil[2]{University of Rochester}
\affil[3]{University of Washington}
\title{\ouralg: Strategic Parameter Optimization in Recommendation Systems -- Probabilistic is Better than Deterministic}
\begin{document}

\maketitle

\begin{abstract}
Real-world recommendation systems often consist of two phases. In the first phase, multiple predictive models produce the probability of different immediate user actions. In the second phase, these predictions are aggregated according to a set of `\emph{strategic parameters}'  to meet a diverse set of business goals, such as longer user engagement, higher revenue potential, or more community/network interactions. In addition to building accurate predictive models, it is also crucial to optimize this set of `\emph{strategic parameters}' so that primary goals are optimized while secondary guardrails are not hurt. In this setting with multiple and constrained goals, this paper discovers that \emph{a probabilistic strategic parameter regime can achieve better value compared to the standard regime of finding a single deterministic parameter}. The new probabilistic regime is to learn the best distribution over strategic parameter choices and sample one strategic parameter from the distribution when each user visits the platform. 
To pursue the optimal probabilistic solution, we formulate the problem into a stochastic compositional optimization problem, in which the unbiased stochastic gradient is unavailable.
Our approach is applied in a popular social network platform with hundreds of millions of daily users and achieves $+0.22\%$ lift of user engagement in a recommendation task and  $+1.7\%$ lift in revenue in an advertising optimization scenario comparing to using the best deterministic parameter strategy.
\end{abstract}
 
\section{Introduction}
\label{sec:intro}
%
Consider the pipeline to generate a list of personalized contents (e.g., videos, ads) to users in a real (probably oversimplified) recommendation system.  Such pipelines typically include two phases: 1) predicting the probability of user's immediate actions and 2) calculating a rank score for final recommendation, as illustrated in Figure~\ref{fig:example}.
\begin{figure}[!hbt] 
\centerline{\includegraphics[width=0.8\textwidth]{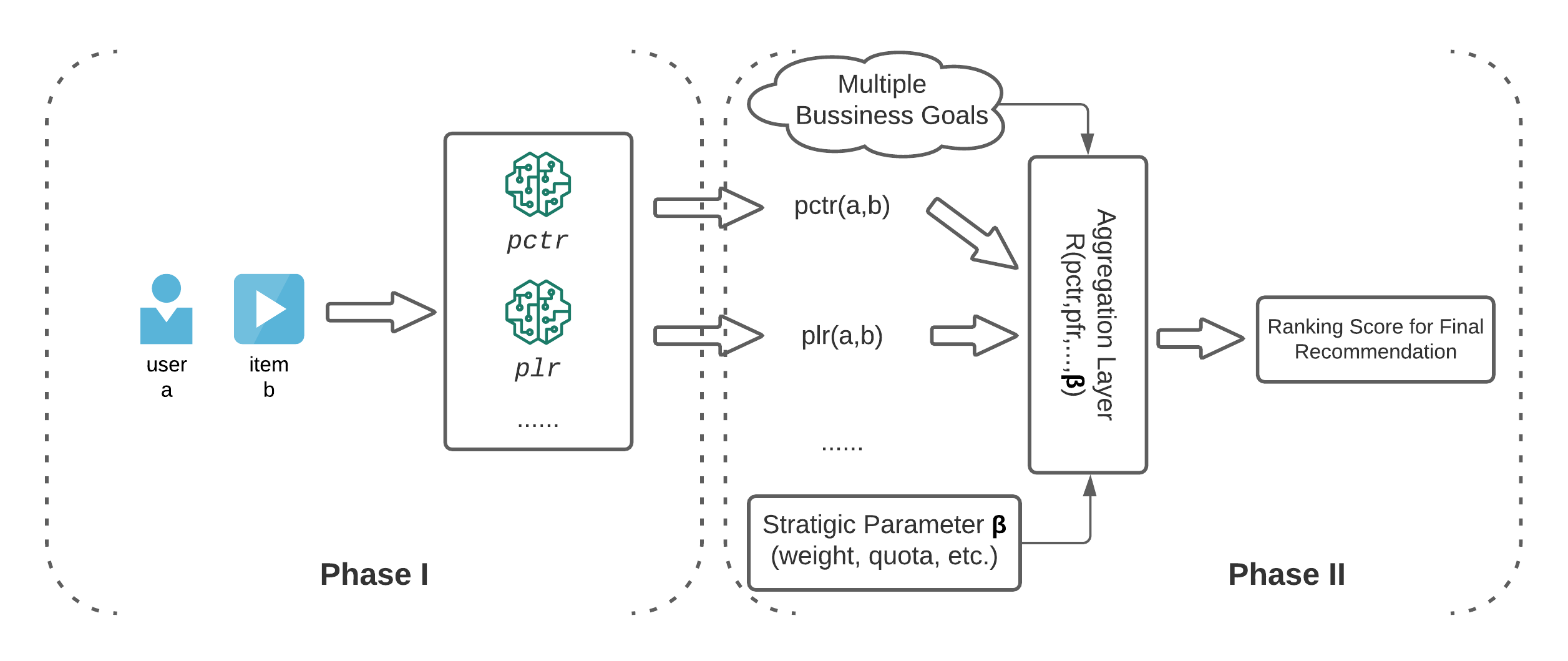}}
\caption{\small A illustrating example of strategic parameters in a simplified recommendation system with two phases. The 1st phase has multiple predictors for click-through-rate (pctr), like-rate (plr), etc. The 2nd phase is an aggregation layer with strategic parameter $\bm\beta$ to generate the final recommendation score. Note that the strategic parameters can be in many other components of a recommendation system. 
}
\label{fig:example} 
\end{figure}
In the first phase, the goal is to accurately predict various immediate user actions such as clicking on an ad or liking a recommended content. The prediction functions to these actions can be directly learned from the user's behavior data. However, it is not optimal to use any single one of these predicted scores to rank and display the final contents because the ultimate goal of recommender is to improve \emph{multiple} downstream business goals. 
For example, the goals could be to maximize the average user engagement time while the average content like rate does not decrease. These business purposes are usually the global and overall metric, which can hardly be modeled by users' immediate actions and are not directly learnable. Therefore, in the second phase, one needs an aggregation function to compute an overall score for each user-item pair $(a,b)$ based on the predictions from the first phase, 
\[
    R_{\bm\beta}(a, b) = R(\text{pctr}(a,b), \text{plr}(a,b), \cdots; \bm\beta),
\]
where $\text{pctr}(a,b)$, $\text{plr}(a,b)$ are the predicted content click through rate, like rate, etc. ${\bm \beta}$ is the hyper parameter of the function $R()$, which is referred as the \textbf{strategic parameter} in recommendation. 
A simple version of $R_{\bm\beta}(\cdot)$ can be a weighted sum \citep{ma2018entire,zhao2019recommending,tang2020progressive} where ${\bm\beta}$ are the mixing weights. Our paper focuses on the strategic parameters. 

Examples like Figure~\ref{fig:example} is ubiquitous in industrial recommendation systems. The strategic parameter $\bm\beta$ can be in many components of a practical recommendation system and are often in the form like weights, thresholds, or quotas. 
While the problem of building predictive models (e.g, models in the 1st phase in Figure~\ref{fig:example}) has been extensively studied\citep{gomez2015netflix,  zhang2019deep},  deciding the strategic parameter has not yet received equal attention. It is mainly because the problem is generally classified into a \emph{standard} black-box optimization problem -- tuning the strategic parameters to optimize the overall business metrics. Manual tuning via controlled experiments are popular in practice \citep{zhao2019recommending, rubinstein2013cross,suzuki2020multi} and more sophisticated methods like bandit optimization \citep{bubeck2012regret, graepel2010web}, evolution algorithms \citep{chen2013combinatorial}, Pareto-efficiency \citep{pei2019value,lin2019pareto,milojkovic2019multi}, and  Bayesian optimization \citep{letham2019constrained,kandasamy2018parallelised, belakaria2019max} have been explored. All of these existing methods pursue a \emph{deterministic} (or single) strategic parameter; that is, the same strategic parameter $\bm\beta$ is applied to all the requests/users to the recommendation system. 

In this paper, we argue that the deterministic (single) strategic parameter is \emph{not} the optimal solution. We discover that a \emph{probabilistic} choice for the strategic parameter can be superior to any deterministic one, especially when there are multiple business goals to pursue simultaneously. In the probabilistic solution, we learn an {optimal distribution} over all the candidate strategic parameter choices. The system then works as follows: when there is a request to the recommender (e.g., a user visit), we first sample one of the multiple strategic parameters using the learned distribution and then apply the randomly selected strategic parameter in the recommendation pipeline. 
A common recommender is visited by users hundreds of millions of times daily. Therefore, the proposed solution can be viewed as a probabilistic mixture over multiple strategic parameters. 
The probabilistic solution achieves supreme performance over deterministic solution since the latter can therefore be viewed as restricting the distribution vector in the probabilistic solution to a one-hot vector. 
We demonstrate the supreme performance using extensive simulation studies and online AB testing results later in this paper. 


The challenges of finding the optimal probabilistic parameters are two folded. On the one hand, the distribution of the unknown metrics can only be learned by interacting with the online customer-facing systems and then observing sparse and noisy samples via multiple iterations. 
On the other hand, the optimization problem to find the best probabilistic distribution falls into the family of stochastic compositional optimization, in which the unbiased stochastic gradient is unavailable. 
We propose to use an average of the unbiased estimator over the history in approximating the unbiased stochastic gradient. This trick also helped to address the sparse observation and reduced the noise. We also incorporated proximal projection to remove the simplex constraint. The proposed approach, Probabilistic pArameter optimization with unbiased STOchastic gradient approximation ($\ouralg$), converges at a rate $\mathcal{O}\left(1/{\sqrt{T}}  \right)$ and admits the regret in an order of $\mathcal{O}\left(\sqrt{T} \right)$.  

In sum, the key contributions of this paper are,
\vspace*{-1ex}
\begin{itemize}[leftmargin=*]
    \item  We discover a \emph{probabilistic} strategic parameter solution that outperforms the classic deterministic strategic parameter when we are pursuing multiple goals in recommendation systems.    
    \item  We formulate the problem of finding the optimal probabilistic parameter solution as a compositional stochastic optimization task, and developed an efficient stochastic gradient algorithm. We proved that the proposed algorithm converges to the optimal probabilistic distribution at a rate of  $\mathcal{O}\left(1/{\sqrt{T}}  \right)$ and the regret admits an order of $\mathcal{O}\left(\sqrt{T} \right)$ where $T$ is the number of iterations. 
    \item We implement the proposed probabilistic strategic parameter solution at a leading social network platform with hundreds of millions of daily active users and tens of  billions annual revenue. Note that in a platform of this scale, a slight percentage gain provides enormous business value. The proposed approach achieved $+0.22\%$ lift of user engagement in a recommendation scenario and $+1.7\%$ lift in revenue in an advertising optimization scenario compared with the optimal deterministic parameter choice. 
\end{itemize}

The rest of this paper is organized as follows. We first discuss related literature in Section~\ref{sec:related}. In Section~\ref{sec:problem} we formulate the strategic parameter searching as an optimization problem and argue why the probabilistic regime is better. We discussed our solution in Section~\ref{sec:algorithm} and provide the theoretical analysis in Section~\ref{sec:thm}. We include a series of simulation studies in Section~\ref{sec:simulation}, and present online AB testing results in Section~\ref{sec:online}. 

\section{Related Work}
\label{sec:related}
Black-box strategic parameter tuning has been extensively studied using both heuristic and Bayesian approaches to find the maximizer of an unknown and noisy objective function. Popular heuristic approaches such as Genetic Algorithm \citep{mitchell1998introduction},  Cross-Entropy-Methods \citep{rubinstein2013cross}, and Particle Swarm Optimization \citep{kennedy1995particle} have been widely adopted empirically. 
\citep{letham2019constrained,kandasamy2018parallelised} proposed to use the Bayesian approach to sequentially explore the strategic parameters.  \citep{hill2017efficient,ding2019whole, agarwal2015constrained} proposed to use the Bayesian approach with contextual information in industrial recommendation systems. Multi-armed bandits (MAB) and bandit optimization problems are also widely used in searching the best deterministic strategic parameter \citep{thompson1933likelihood,bubeck2012regret}. The contextual bandit setting considers environmental conditions and can select a deterministic parameter for each different context \citep{he2020contextual,krause2011contextual}. We note that this is different than the proposed probabilistic regime (see Section~\ref{sec:problem} for detailed discussion). 

Multi-task learning has raised attention recently in recommendation systems \citep{zhang2019deep, gomez2015netflix}. However, the focus of existing literature is in the phase of predicting immediate user actions \citep{gu2020deep, tang2020progressive, di2017adaptive,jugovac2017efficient} and not the strategic parameters. Different architecture for model-parameter sharing, loss sharing, etc., have been explored \citep{ma2018entire, zhao2019recommending, di2017adaptive, tang2020progressive}. Other ways of addressing multiple losses have also been studied, such as weighted sum \citep{lin2019pareto, ribeiro2014multiobjective} and adaptive optimization optimization\citep{di2017adaptive, tang2020progressive}.
Regarding learning deterministic strategic parameters with multiple objectives, \citep{cui2017novel} uses an evolutionary algorithm to include a diversity indicator on top of item rating evaluation; \citep{pei2019value} maximizes a so-called economic value based on reinforcement learning. Recently, much attention has been paid to finding the Pareto frontier of multiple goals \citep{milojkovic2019multi}, where a set of Pareto optimal items is selected, and no alternative can improve every objective simultaneously. And Pareto efficient algorithms can help coordinates multiple objectives \citep{lin2019pareto,ribeiro2014multiobjective} and the problem can be solve by entropy search \citep{suzuki2020multi,belakaria2019max}, expected hyper-volume improvement \citep{daulton2020differentiable}, etc. 

Recently, \citep{tu2020personalized} proposed to personalize the strategic parameter assignment based on the user's demographic group or device type.  However, in their proposed settings, deterministic single-best parameters are still the solution for each user group or context. We also note that \citep{tu2020personalized} uses post AB-testing data to estimate the stochastic effect of treatment/parameters. Our paper considered a dynamic and iterative approach that is more effective in industrial applications. 

\section{Why Probabilistic Solution is Better Than Singleton Solution}
\label{sec:problem}
%
Now we formally define the probabilistic strategic parameter solution and show its advantage over the deterministic one. We start by formulating the strategic parameter selection problem into an optimization task. Then we show that the proposed probabilistic solution defines a larger feasible domain than the deterministic solution, implying its superior performance. 
%
\subsection{Optimization View of the Strategic Parameter Tuning Problem}
Recall the motivating example in Section~\ref{sec:intro} where we tune the strategic parameter $\bm\beta$ to optimize {one or} multiple business goals, e.g., 
\begin{itemize}[leftmargin=*]
\item maximize the expected  averaged engagement time ($\mu^{X}$),
\item maximize the expected  averaged engagement time ($\mu^{X}$) while the like rate ($\mu^{Y}$) does not drop,
\item maximize a utility function of both the expected  averaged engagement time  ($\mu^{X}$) and the  like rate  ($\mu^{Y}$).
\end{itemize}
In this example, to be more precise, we denote by $\mu^X(\bm\beta)$ the expected daily averaged engagement time when applying the strategic parameter $\bm\beta$ in the system, and $\mu^Y(\bm\beta)$ follow a similar definition. Note that the ground truth value of $\mu^X(\bm\beta)$ or $\mu^Y(\bm\beta)$ is difficult to obtain. We can usually obtain an unbiased sample (denote as $\hat{u}^X(\bm\beta)$) by applying ${\bm \beta}$ to a group of randomly selected users, namely,
\[
\hat{u}^X(\bm\beta) := \frac{1}{\#\text{users}}\sum_{j} \text{user $j$'s $X$ metric (i.e., daily engagement time) applying strategic parameter $\bm\beta$}.
\]
Here $\mathbb{E}(\hat{u}^X(\bm\beta)) = \mu^X(\bm \beta)$ is a unbiased observation of $\mu^X(\bm \beta)$ whose variance depends on the number of users/requests in our observation. 
In general, we use $X$ to denote the metrics we primarily want to improve and $Y$ to denote the guardrail metrics we do not want to drop. 
Next we can formulate the \textbf{conventional} strategic parameter optimization in the form of a numerical optimization problem
\begin{align}
\label{eq:formulation1}
    \max_{\beta \in \mathcal{B}} \quad f(\mu^X({\bm \beta}), \mu^Y({\bm \beta}))
\end{align}
where $\mathcal{B}$ is the set of all possible strategic parameters. For different goals, $f$ can be designed as,
\begin{itemize}[leftmargin=*]
\item $f(x, y) = x$ if we only have one metric to optimize. This is less common in practice. 
\item $f(x,y) = x - h(y;c)$ where $h(y)=\infty $ if $y\leq c$ and $h(y)=0$ otherwise. This $f$ maximizes the metric $\mu^{X}$ while strictly requiring metric $\mu^{Y}$ to be higher than a lower-bound $c$. 
\item $f(x, y) = x - \lambda \min(0, y-c)^2$ and $\lambda>0$ is some constant. This combined utility function aims to improve metric ${X}$ while imposing a square penalty if the metric ${Y}$ drops below a pre-define threshold $c$. 
\end{itemize}
%
%
\subsection{Probabilistic Strategic Parameter Solution}
%
Now to illustrate our proposed probabilistic solution, we first reformulate \eqref{eq:formulation1} into an mathematically equivalent form. Here we assume that the number of possible strategic parameter options $\mathcal{B} = \{ \bm\beta^{1}, \ldots, \bm\beta^{K} \}$ is finite and is of size $K$ for simplicity. The infinite scenario follows the same spirit.
\footnote{See appendix for discussion on continuous parameter space. }
Let's first denote the rewards of different strategic parameters compactly in vector form
\[
 \bm\mu^{X}(\mathcal{B})= \left[\mu^{X}(\bm\beta^{1}), \ldots, \mu^{X}(\bm\beta^{K}) \right] \quad  \bm\mu^{Y}(\mathcal{B})= \left[\mu^{Y}(\bm\beta^{1}), \ldots, \mu^{Y}(\bm\beta^{K}) \right]
 \]
And re-write the optimization Eq~\eqref{eq:formulation1} in an equivalent form,
\begin{equation}
\begin{aligned}
\label{eq:formulation_single}
    \max_{{\bm p}} \quad & f(\bm\mu^{X}(\mathcal{B})\bp, \bm\mu^{Y}(\mathcal{B})\bp)
    \\ \text{s.t.} \quad & \bp \in S:=\left\{ \bp \in \{0,1\}^K | \Sigma_{k=1}^{K} p_k =1\right\}. 
\end{aligned}
\end{equation}
It is worth noting that the optimal solution to Eq.~\eqref{eq:formulation_single} selects the best option from $\mathcal{B}$ to optimize the target goal. We refer to this as the \textbf{deterministic} solution since the same strategy parameter is applied to all the recommendation requests. 

Next we are ready to propose the probabilistic solution. Revisiting ~\eqref{eq:formulation_single}, one can mathematically change feasible set selection variable $\bp$ to a larger set, $\bar{S} = \{ \bp\in \left[0,1\right]^{K} | \Sigma_{k=1}^{K} p_k =1\}$, which is the convex hull formed by $S$. $\bar{S}$ is the $K$-dimensional probability simplex and $\bp\in\bar{S}$ can be viewed as a pmf over $K$ strategic parameters in $\mathcal{B}$. 
%

The pmf view of $\bp$ suggests a new \textbf{probabilistic} solution of applying strategic parameters in recommendation systems. Say for example $K=3$ and $\bp = [0.8, 0.2, 0]$. When a user visits the platform and a recommendation request is created, we randomly select either  $\bm\beta^{1}$ with probability $0.8$ or $\bm\beta^{2}$ with $0.2$, and then apply the selected $\bm\beta^{1}$ or $\bm\beta^{2}$ as the strategic parameter to generate final recommendation list for the user. If we consider the average effect over hundreds of millions of daily requests in a industrial recommender, the inner product $\bm\mu^{X}(\mathcal{B})\bp = \sum_k p_k \mu^{X}(\bm\beta^{k})$ still represents the expected metric $X$ (here the expectation is over both the randomness of sampling from $\bp$ and the noise in $\hat{u}^{X}$).  
%
%
%
%
%
Formally, we propose to pursue the optimal probabilistic solution by solving,
\begin{equation}
\begin{aligned}
\label{eq:formulation_prob}
    \max_{{\bm p}} \quad & f(\bm\mu^{X}(\mathcal{B})\bp, \bm\mu^{Y}(\mathcal{B})\bp)
    \\ \text{s.t.} \quad & \bp \in \bar{S} = \{ \bp\in \left[0,1\right]^{K} | \Sigma_{k=1}^{K} p_k =1\}.
\end{aligned}
\end{equation}
%
\subsection{Probabilistic Solution Achieves better Objective Value than Singleton Solution}
Since $f$ is in general a non-linear in Eq~\eqref{eq:formulation_prob}, the optimal $\bp$ to Eq~\eqref{eq:formulation_prob} may not be a one-hot solution. For the same $f$ and $\bm\mu$'s in ~\eqref{eq:formulation_single} and ~\eqref{eq:formulation_prob},  let $\bp_S^\star$ be the optimal (deterministic parameter) solution to Eq.~\eqref{eq:formulation_single}  and  $\bp_{\bar{S}}^\star$ be the optimal (probabilistic parameter) solution to Eq.~\eqref{eq:formulation_prob}. Since the feasible set of ~\eqref{eq:formulation_single} (deterministic solution) is a subset of ~\eqref{eq:formulation_prob}. It is straightforward to verify that,

\noindent\textbf{Observation 1} The optimal objective value  $f(\bm\mu^{X}\bp_S^*, \bm\mu^{Y}\bp_S^*) \leq f(\bm\mu^{X}\bp_{\bar{S}}^*, \bm\mu^{Y}\bp_{\bar{S}}^*))$. Namely, \textit{ the probabilistic parameter solution achieves better reward compared to the deterministic parameter}. 

\begin{figure}[!hbt] 
\centerline{\includegraphics[width=0.4\textwidth]{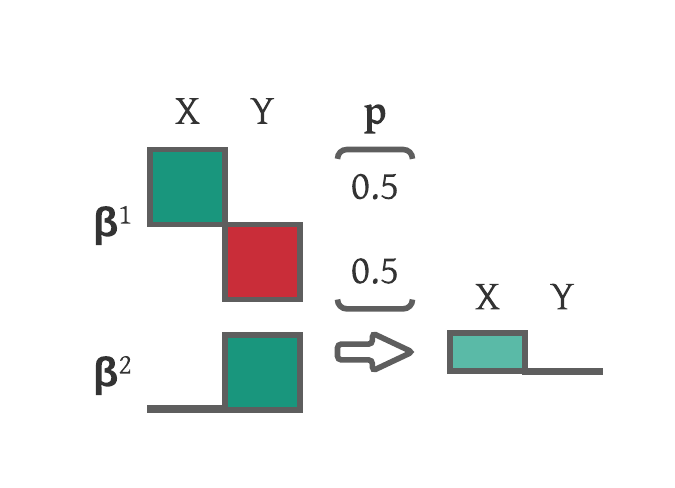}}
\caption{\small An Illustrating Example on why probabilistic parameter can achieve better overall rewards than single deterministic parameter. $X,Y$ refer to two different business metrics, $\bm\beta^{1}, \bm\beta^{2}$ are two possible strategic parameter choices. $\bp=[0.5, 0.5]$ is a probabilistic parameter mixing weight that samples $\bm\beta^{1}$ or $\bm\beta^{2}$ with equal probability for each request. The right-hand side indicated the overall rewards achieved with $\bp$.  }
\label{fig:toy} 
\end{figure}

\textbf{An Illustrating Example}: To illustrate Observation 1, consider $K=2$ strategic parameters and $\mu^{X}(\mathcal{B}) = [2, 0]$,i.e., the first choice $\bm\beta^{1}$ has an expected engagement time of $2$ and the engagement time is $0$ for $\bm\beta^{2}$). In practice we focus more on the relative gain over some baseline. So setting $\mu^{X}(\bm\beta^{2})=0$ means there is no relative gain on engagement time over baseline. We also set $\mu^{Y}(\mathcal{B}) = [-2, 2]$ for metric $Y$. Let the objecive function be $f(x,y) = x -h(y,c=0) $ where $h$ imposes infinity penalty when $y<0$. For deterministic solutions, 
selecting first parameter $\bp_{(1)} = [1,0]^{\top}$ results in $f = -\infty$ since the metric $Y$ is below the threshold $c=0$ for first strategic parameter $\bm\beta^{1}$. 
On the other hand, selecting the second parameter $\bp_{(2)} = [0,1]^{\top}$ result in $f = 0$ since $\bm\beta^{2}$ has a lower metric $Y$. Now for the probabilistic case, we can choose  $\bp_{(\text{mix})} = [0.5, 0.5]^{\top}$ and the average metric $\bm\mu^{Y}\bp = 0$ is above the threshold so the reward is $f = 1$. Therefore, the $\bp_{\text{(mix)}}$ achieves better objective value than the best deterministic solution $\bp_{(2)}$.
 
The illustrating example we've discussed represents a common real-world scenario, that is, one strategic parameter choice ($\bm\beta^{1}$ in this example) improves metric $X$ but at the same time results in a lower average value of metric $Y$, and vice versa for another strategic parameter choice ($\bm\beta^{2}$). This scenario is common as multiple business metrics in real applications often compete with each other (given limited user attention on the platform). We will demonstrate this in Section~\ref{sec:simulation} and~\ref{sec:online}. 

Before concluding this section, note that we used two metrics $X,Y$ so far for simplicity. In real application, there are typically multiple metrics $X$'s to improve and various metrics $Y$'s to protect. The formulation and conclusions in this section directly extends to this general setting.

\section{How to Solve the Probabilistic Parameter Optimization}
\label{sec:algorithm}
We now discuss how to solve the probabilistic parameter optimization problem in Eq~\eqref{eq:formulation_prob}, which falls into the family of constrained stochastic compositional optimization \citep{wang2016accelerating} tasks. The key difficulty in solving  Eq~\eqref{eq:formulation_prob} is that the ground-truth rewards ${\bm\mu}^{X}({\bm \beta})$ and ${\bm\mu}^{Y}({\bm \beta})$ are unknown. 
To obtain an empirical sample $\hat{u}$ in practice, one needs to choose one of the strategic parameters (say $\bm\beta^{k}$), apply it to the actual recommender system (typically to a small group of users/requests), and observe the empirical average metrics $\hat{u}^{X}(\bm\beta^{k}), \hat{u}^{Y}(\bm\beta^{k})$ (in our motivating example, they are the user's daily engagement time and his/her like rate), and repeat this for multiple rounds/iterations. 
The observation at each iteration is a sparse sample of ${\bm\mu}^{X}({\bm \beta})$ and ${\bm\mu}^{Y}({\bm \beta})$. 
Furthermore, the observation of one iteration can only be collected after a certain time. Say, if hourly engagement time is our goal, then, we need to apply $\bm\beta^{k}$ in the system for at least one hour. 

We choose to solve our problem using stochastic gradient approach. The sparse and noisy observation in our problem raises two technical challenges: 1) the unbiased stochastic gradient is generally hard to obtain; 2) the constraint is quite tricky to enforce. We next discuss our technical solutions to address them. 
%
\subsection{Unbiased Stochastic Gradient Approximation}
%
In solving Eq~\eqref{eq:formulation_prob}, an unbiased stochastic gradient for is hard to obtain since the our observation in each round to approximate  the true expectations ${\bm\mu}^{X}({\bm \beta})$ and ${\bm\mu}^{Y}({\bm \beta})$ are noisy and sparse. To be concrete, let's
define in the matrix form
\begin{equation}
\label{eq:matrix_notation}
\bm\mu = \bm\mu(\mathcal{B}) = 
\begin{bmatrix} 
\mu^{X}(\bm\beta^{1}), \ldots, \mu^{X}(\bm\beta^{K}) \\
\mu^{Y}(\bm\beta^{1}), \ldots, \mu^{Y}(\bm\beta^{K})
\end{bmatrix} 
\in \mathbb{R}^{2\times K}
\end{equation}
and compactly write $f(\bm\mu \bp) $ where $\bm\mu \bp = \left[ \bm\mu^{X}(\mathcal{B}) \bp, \bm\mu^{Y}(\mathcal{B}) \bp \right]^{\top} $ for the objective function in Eq~\eqref{eq:formulation_prob}. Note that there could be more than 2 rows in $\bm\mu$ if we have more than 2 metrics of interests. For simplicity, we will keep the dimension as 2 throughout this section.  

Since vector $\bm\mu$ are the expectation of the noisy metric observations,  the optimization objective $f(\bm\mu \bp)$ in ~\eqref{eq:formulation_prob} is different from the common stochastic optimization admitting the form of $\E \left[f(\cdot)\right]$ (with no expectation inside $f$).
This falls into the family of stochastic compositional objectives and the unbiased stochastic gradient is generally not achievable \citep{wang2017stochastic}. More specifically, the true gradient of~\eqref{eq:formulation_prob} at learning step $t$ admits the form
\[
 \frac{\textbf{d} f(\bm\mu \bp)}{\textbf{d} \bp} = \bm\mu^\top \nabla f(\bm\mu \bp),
\]
which requires us to obtain unbiased estimator of the term $\bm\mu^{\top} \nabla f(\bm\mu \bp)$ at each iteration (learning step) $t$. Recall that at each round $t$, we can sample one strategic parameter $\bm\beta^{i_t}$ from the current distribution $\bp_{t}$, and observed empirical metrics $\hat{u}^{X}(\beta^{i_t}), \hat{u}^{Y}(\beta^{i_t})$, etc. We can then   construct
\begin{equation}
\begin{aligned} 
\label{eq:Uhat}
\hat{U}_t =   \begin{bmatrix} 
0,& \ldots &, 0, \hat{u}^{X}(\beta^{i_t}) / p_{t, i_t},  0, \ldots, 0 \\
0,& \ldots &, 0, \hat{u}^{Y}(\beta^{i_t}) / p_{t, i_t},  0, \ldots, 0 \\
\end{bmatrix} 
\end{aligned}
\end{equation}
and verify that $\E( \hat{U}_t ) = \bm\mu$ is unbiased estimator of $\bm\mu$. However,
this cannot ensure that the stochastic gradient we compute is unbiased, because
$\E ( \hat{U}_t^{\top} \nabla f(\hat{U}_t \bp_t) ) \neq \E(\hat{U}_t )^\top \nabla f(\E (\hat{U}_t) \bp_t) = {\bm\mu}^\top \nabla f(\bm\mu \bp_t)$
unless $f$ is linear. In this case, our solution is to use the averaged value of all historical (unbiased) estimator
\begin{align}
\label{eq:estimator}
\hat{V}_t:=\frac{1}{t}\sum_{s=1}^t\hat{U}_s
\end{align}
%
which gets more and more accurate as training continues. We then approximate the unbiased gradient using 
\begin{equation}
\label{eq:gt}
    \bm{g}_t = \hat{V}_t^\top \nabla f(\hat{V}_t \bp_t).
\end{equation} 
We note that our gradient estimation is different than the stochastic compositional gradient descent (SCGD) framework in \citep{wang2016accelerating, wang2017stochastic}. This difference is due to the linear form $\bm\mu\bp$ inside the $f$ function, a special case of the generic SCGD. Since we used the history average $\hat{V}_t$, the estimation of gradient in ~\eqref{eq:gt} is more stable and converges faster than the generic SCGD solution \citep{wang2017stochastic} (see Section~\ref{sec:simulation} for simulation study).

\subsection{KL divergence removes the simplex constraint}
The second technical challenge is to handle the simplex constraint $\bp \in \bar{S}$. One can certainly apply the projected step after the (approximate) stochastic gradient descent step. We choose KL divergence (other than the Euclidean distance) as the Bregman distance which can naturally ensure the next iterate $\bp_{t+1}$ within the simplex constraint $\bar{S}$ even without considering the simplex constraint. More specifically, $\bp_t$ is updated by the following constraint-free proximal step with a given step-size $\gamma_t$
\vspace*{-1ex}
\begin{equation*}
\bp_{t+1} = \text{argmin}_{\bp} - \langle {\bm g}_t, \bp \rangle + \frac{1}{\gamma_t}\text{KL}(\bp \| \bp_t)
\end{equation*}
\vspace*{-1ex}
resulting in the following closed form update rule,
\begin{equation*}
w_{t,k} = p_{t,k} \exp\left( \gamma_t \bm g_{t,k} \right),~k=1,\ldots, K, ~\quad~ p_{t+1,k} = {w}_{t, k} / \sum_{k'=1}^{K} {w_{t,k'}} .
\end{equation*}
%
\subsection{Overall Algorithm}
%
We summarize the proposed robabilistic Parameter Optimization with Unbiased Stochastic Gradient Approximation in Algorithm~\ref{alg:TS-SCGD} as an iterative exploration and optimization procedure. At each round, we first sample one strategic parameter from the current probabilistic pmf $\bp_t$, observed its rewards from online recommender, and then conduct the gradient optimization. Note that instead of using the $\bp_T$ from last round as the final return, we used the averages $\bar{\bp}_{T} = {1\over T}\sum_{s=1}^{T} \bp_s$ as the estimated best probabilistic solution, which is more stable in practice. We show this average do converges to the optimal rewards. 
\begin{algorithm}[!hbt]
\caption{\small {\textbf{P}robabilistic p\textbf{A}rameter Optimization with unbiased \textbf{STO}chastic Gradient Approximation} (PASTO)} 
\begin{algorithmic}[1]
\REQUIRE Initial estimator of rewards $\widehat{U}_0$ ($2\times K$ matrix), learning rate $\gamma$, smoothing parameter $\epsilon_{t}, t=1,\ldots, T$, the total number of iteration steps $T$;
\ENSURE  $\bar{\bp}_T$
\STATE Initialize $\mathbf{w}_{0} = \left[ 1,\ldots ,1 \right]^{\top}=:\mathbf{1} \in \mathbb{R}^{K}$, $\hat{V}_0 \leftarrow \hat{U}_0$
\FOR {each iteration $t=1, \ldots, T$}
\STATE Sample one strategic parameter  $\bm\beta^{i_t}$ from $\mathcal{B}$ based on the pmf $\bm{p}_t = (1-\epsilon_t) \frac{\bm{w}_{t-1}}{\|\bm{w}_{t-1}\|_1} + \epsilon_t/K \cdot \mathbf{1}$
\STATE Apply the sampled strategic parameter $\bm\beta^{i_t}$ to the online system and observe $\hat{u}^{X}({\bm\beta}^{i_t}), \hat{u}^{Y}({\bm\beta}^{i_t}), \ldots$. Construct $\hat{U}_t$ based on Eq~\eqref{eq:Uhat}
\STATE Update $\hat{V}_{t} = \frac{t}{t+1} \hat{V}_{t-1} + \frac{1}{t+1}  \hat{U}_t $ (This is the same as Eq~\eqref{eq:estimator} but in a recursive form.)
\STATE Compute the stochastic gradient using Eq~\eqref{eq:gt} ${\bm g}_t:=   \left(\hat {V}_t\right)^{\top}\nabla f(\hat{V}_t\bm p_t)$
\STATE Update $w_{t,i} = w_{t-1,i}\exp\left(\gamma g_{t,i} \right) $
\ENDFOR
\STATE Return $\bar{\bp}_{T} := \frac{1}{T}\sum_{t=1}^{T} \bp_t$.
\end{algorithmic}
\label{alg:TS-SCGD}
\end{algorithm}

 Note that following \citep{exp3}, we added a smoothing parameter $\epsilon_t$ at each iteration to ensure all candidate parameters have a non-zero chance of being selected for exploration and to cap the unbiased estimation ~\eqref{eq:Uhat} . Intuitively, higher $\epsilon_t$ also introduces more exploration of the strategic parameters. Similarly, the step-size $\gamma$ also controls the degree of exploration.

\section{Theoretical Analysis}
\label{sec:thm}
We now present the theoretical analysis for Algorithm~\ref{alg:TS-SCGD}. Without loss of generality, we assume that $f$ in the optimization problem~\eqref{eq:formulation_prob} is a concave objective function. 
\footnote{Note that $f$ represents the rewards to be maximized in application. Therefore, we assume $f$ being concave in our analysis.  } 
For the results to hold, we first introduce a few common technical conditions on our objective function $f$ and the underlying ground-truth rewards $\bm\mu$'s.
\begin{assumption}\label{ass}
We make the following commonly used assumptions for $f(\cdot)$ and $\bm\mu(\mathcal{B})$:
\begin{itemize}
\item  The gradient of objective $f(\cdot)$ defined in Eq~\eqref{eq:formulation_prob} is bounded, and $f(\cdot)$ is  $L_f$-smooth, i.e.,
\begin{align*}
\Vert \nabla f(\bm{\theta}_1) - \nabla f(\bm{\theta}_2) \Vert \leq   L_f \Vert \bm{\theta}_1 - \bm{\theta}_2 \Vert ~\quad\text{and}~ \|\nabla f(\bm{\theta}_1)\|\leq  G_f,\quad \forall \bm{\theta}_1,~\bm{\theta}_2
\end{align*}
%
\item The Frobenius norm of $\bm\mu(\mathcal{B})$ defined in Eq~\eqref{eq:matrix_notation} is bounded, i.e., $\|\bm\mu(\mathcal{B})\|_F \leq G_U$
\item The magnitude of the gradient estimation $\bm g_t$ is bounded for all iterations, i.e., $\|\bm g_t\|\leq G, \forall t.$
\end{itemize}
\end{assumption}

\subsection{Convergence Analysis} 
Since $\bar{\bp}_T$ is what we deploy in the actual production system after the learning process, our \textbf{primary} focus here is the  convergence of  Algorithm~\ref{alg:TS-SCGD}. Namely, we want to show the estimate $\bar{\bp}_T$ do  converge to the optimal probabilistic mixing vector as $T$ goes to infinity. 
Equivalently, we need to show that the overall reward that can be achieved by $\bar{\bp}_T$, i.e., $f\left(\bm\mu\overline{\bm{p}}_T  \right)$, can converge to the optimal reward of Eq~\eqref{eq:formulation_prob} (the convergence is in the sense of expectation). Formally, 

\begin{theorem}\label{main:theorem_convex_convergence} (\textbf{Convergence of Alg.~\ref{alg:TS-SCGD}})
Under Assumption~\ref{ass},  set $\epsilon_t = \frac{G}{\sqrt{t}}$ and $\gamma = \frac{1}{\sqrt{T}}$, further, assuming that the noisy observations $\hat{U}_t$ and $\hat{U}_s$ from two different iterations $t,s$ are independent. Then, 
\begin{align*}
\max_{\bp \in\bar{S}}  f\left( \bm\mu \bp  \right) - \E\left[f\left(\bm\mu\overline{\bm{p}}_T  \right)\right] 
\leq & \frac{1}{\sqrt{T}}\left(\ln K + 4eG^2  +\frac{32(G_f^2 + G_U^2L_f^2)G_U^2}{G}\right).
\end{align*}
Here the expectation is over all the randomness in the gradient history $\{\bp_t \}_{t=1}^{T}$.
\end{theorem}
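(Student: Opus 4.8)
The plan is to read Algorithm~\ref{alg:TS-SCGD} as online mirror ascent (the entropic / exponentiated-gradient mirror map) applied to the concave objective $F(\bp):=f(\bmu\bp)$ over the simplex $\bar S$, and to charge the gap $\max_{\bp\in\bar S}F(\bp)-\E[F(\bar\bp_T)]$ to two sources: (i) the usual mirror-descent regret driven by the \emph{used} gradient $\bm g_t=\hat{V}_t^\top\nabla f(\hat{V}_t\bp_t)$, and (ii) the error from $\bm g_t$ being only an approximation of the true gradient $\bar{\bm g}_t:=\bmu^\top\nabla f(\bmu\bp_t)$. First I would use concavity of $F$ together with Jensen on the averaged iterate: since $F(\bar\bp_T)\ge\frac1T\sum_t F(\bp_t)$ and $F(\bp^\star)-F(\bp_t)\le\langle\bar{\bm g}_t,\bp^\star-\bp_t\rangle$, it suffices to bound $\frac1T\sum_t\E\langle\bar{\bm g}_t,\bp^\star-\bp_t\rangle$, where $\bp^\star$ is the maximizer. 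This reduces a statement about function values to one about linearized regret, the natural object for the exponentiated-gradient update.

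Second, I would add and subtract $\bm g_t$ to split the summand as $\langle\bm g_t,\bp^\star-\bp_t\rangle+\langle\bar{\bm g}_t-\bm g_t,\bp^\star-\bp_t\rangle$. For the first piece I would invoke the standard KL/Bregman one-step inequality for the multiplicative update $w_{t,k}=w_{t-1,k}\exp(\gamma g_{t,k})$, telescoping $\mathrm{KL}(\bp^\star\|\cdot)$ to obtain a bound of the form $\frac{\ln K}{\gamma}+\frac{\gamma}{2}\sum_t\|\bm g_t\|^2$. The smoothing only perturbs the played distribution by $\epsilon_t(\tilde\bp_t-\tfrac1K\mathbf 1)$, where $\tilde\bp_t=\bm w_{t-1}/\|\bm w_{t-1}\|_1$, contributing an extra $\sum_t\epsilon_t\|\bm g_t\|$ that I would control via $\|\bm g_t\|\le G$. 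Plugging in $\gamma=1/\sqrt T$, $\epsilon_t=G/\sqrt t$, and $\sum_t 1/\sqrt t\le 2\sqrt T$, then dividing by $T$, turns these into the $\ln K$ and $O(G^2)$ terms of the claimed bound.

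The third and hardest step is the approximation term $\sum_t\E\langle\bar{\bm g}_t-\bm g_t,\bp^\star-\bp_t\rangle$, where the compositional structure bites: although each $\hat{U}_t$ is conditionally unbiased for $\bmu$, the nonlinearity of $f$ makes $\bm g_t$ a \emph{biased} estimate of $\bar{\bm g}_t$, so one cannot simply take expectations and kill this sum. The target is to control it by the \emph{second} moment of the gradient error rather than its magnitude. Using $L_f$-smoothness and the gradient bound $G_f$, together with $\|\bmu\|_F,\|\hat{V}_t\|_F\lesssim G_U$, I would show $\|\bar{\bm g}_t-\bm g_t\|\le G_f\|\hat{V}_t-\bmu\|_F+L_f\|\hat{V}_t\|_F\|\hat{V}_t-\bmu\|_F$, whence $\E\|\bar{\bm g}_t-\bm g_t\|^2\lesssim(G_f^2+G_U^2L_f^2)\,\E\|\hat{V}_t-\bmu\|_F^2$. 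The crucial estimate is $\E\|\hat{V}_t-\bmu\|_F^2$: because $\hat{V}_t=\frac1t\sum_{s\le t}\hat{U}_s$ averages (assumed) independent unbiased matrices, the cross terms vanish and $\E\|\hat{V}_t-\bmu\|_F^2=\frac1{t^2}\sum_{s\le t}\mathrm{Var}(\hat{U}_s)$; the importance weights $1/p_{s,i_s}$ inflate each variance by at most $K/\epsilon_s$, so with $\epsilon_s=G/\sqrt s$ and $\sum_{s\le t}\sqrt s\lesssim t^{3/2}$ one gets $\E\|\hat{V}_t-\bmu\|_F^2\lesssim G_U^2/(G\sqrt t)$. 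This is exactly where the exploration schedule earns its keep and exposes the central tension: larger $\epsilon_t$ shrinks the estimator variance (hence the bias) but enlarges the smoothing correction of step two, and $\epsilon_t=G/\sqrt t$ balances them at the $1/\sqrt T$ scale. Collecting terms, the per-iteration bias of order $(G_f^2+G_U^2L_f^2)G_U^2/(G\sqrt t)$ sums to an $O(\sqrt T)$ cumulative term with constant $(G_f^2+G_U^2L_f^2)G_U^2/G$, which upon dividing by $T$ yields the final term of the bound.

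I expect the genuine difficulty to sit entirely in that bias step, and more precisely in reconciling the estimator-error argument with the fact that $\bp_t$ is itself built from the same noisy history that defines $\hat{V}_t$. A naive Cauchy--Schwarz split $\E\langle\bar{\bm g}_t-\bm g_t,\bp^\star-\bp_t\rangle\le\sqrt2\,\E\|\bar{\bm g}_t-\bm g_t\|$ would only give $\E\|\hat{V}_t-\bmu\|_F\lesssim t^{-1/4}$ per step and degrade the rate to $T^{-1/4}$; the averaging estimator $\hat V_t$ and the independence hypothesis are precisely what let the \emph{first-order} fluctuation cancel in expectation, leaving the second-order (variance) contribution and restoring the $1/\sqrt T$ rate. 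Making this cancellation rigorous despite the $\bp_t$–$\hat V_t$ correlation is the crux of the whole argument.
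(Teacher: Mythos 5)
Your outline tracks the paper's own proof almost step for step: the same concavity-plus-Jensen reduction of $\max_{\bp\in\bar S} f(\bmu\bp)-\E f(\bmu\bar{\bp}_T)$ to the linearized regret $\frac{1}{T}\sum_t \E\langle \bmu^\top\nabla f(\bmu\bp_t),\bp^\star-\bp_t\rangle$, the same add-and-subtract of $\bg_t$, and the same second-moment estimate $\E\|\hat V_t-\bmu\|_F^2\lesssim G_U^2\bigl(\sum_{s\le t}\epsilon_s^{-1}\bigr)/t^2\lesssim G_U^2/(G\sqrt t)$, which is exactly the paper's Lemma~\ref{lemma:grad_bias_iid} (your bookkeeping is in fact slightly more careful: since the algorithm only guarantees $p_{t,i}\ge\epsilon_t/K$, the importance weights inflate the variance by $K/\epsilon_s$, a factor of $K$ the paper silently drops). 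The only cosmetic difference is in the regret step: the paper does not telescope a KL Bregman divergence but runs the EXP3-style potential argument on $W(t)=\sum_k\omega_k(t)$ with $e^x\le 1+x+(e-2)x^2$ (Lemma~\ref{lemma:key}), with the smoothing entering as the extra term $2\sum_t\epsilon_t\E\langle\bg_t,\bp_t\rangle\le 2G\sum_t\epsilon_t$ rather than as your perturbation $\epsilon_t(\tilde\bp_t-\frac{1}{K}\mathbf 1)$; both need the cap $\gamma\|\bg_t\|_\infty\le 1$, which the paper verifies from $\gamma G/\epsilon_t\le 1$, and both yield the $\ln K+O(G^2)$ terms at the same order.

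The genuine gap is the one you yourself flag and then do not close: the cross term $\sum_t\E\langle \bmu^\top\nabla f(\bmu\bp_t)-\bg_t,\,\bp^\star-\bp_t\rangle$. The first-order cancellation you hope for is not available. Conditioned on $\mathcal{F}_{t-1}$ one has $\E[\hat V_t-\bmu\mid\mathcal{F}_{t-1}]=\frac{t-1}{t}(\hat V_{t-1}-\bmu)\neq 0$, and this residual is correlated with the $\mathcal{F}_{t-1}$-measurable $\bp_t$; moreover $\bg_t$ is a nonlinear function of $\hat V_t$, so even full independence would not make it conditionally unbiased for the true gradient. Your fallback, honest Cauchy--Schwarz, gives $\E\|\bmu^\top\nabla f(\bmu\bp_t)-\bg_t\|\lesssim t^{-1/4}$ per step (and this order is typically tight, because the $1/\epsilon_s$ variance inflation makes $\E\|\hat V_t-\bmu\|_F^2\asymp t^{-1/2}$), hence an $O(T^{3/4})$ cumulative term and only a $T^{-1/4}$ rate, exactly the degradation you anticipate. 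You should know that the paper closes this step by asserting $\E\langle \nabla f(\bp_t)-\bg_t,\bp^\star-\bp_t\rangle\le 4\,\E\|\nabla f(\bp_t)-\bg_t\|^2$, justified by the claim ``$\langle a,b\rangle\le\|a\|^2\|b\|^2$ and $\|\bp^\star-\bp_t\|\le 2$''; that inequality is false in general (take $a=b$ of small norm), so the published proof rests on the same soft spot you identified, and converting the inner product into the \emph{squared} error --- which is what the stated $1/\sqrt T$ constant requires --- is not achieved either by your sketch or, as written, by the paper. Your diagnosis of where the difficulty lives is exactly right; your proposal, as written, does not supply the missing argument.
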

In short, the estimation value $f(\bm\mu\bar{\bp}_{T})$ converges to the optimal reward in expectation and the convergence rate admits the order of  $\mathcal{O}\left(\frac{\ln K}{\sqrt{T}} \right)$. Theorem~\ref{main:theorem_convex_convergence} implies that using the $\bar{\bp}_T$ in the online production system can yield the optimal rewards of the probabilistic solution. 

\subsection{Regret Analysis}
Next, we analyze the regrets bound of the iterative learning process. In this analysis, we assume the ground-truth rewards $\bm\mu$ could be different over time. This is often the case in real-world applications since customer preferences and overall trends are constantly changing.
Concretely, we denote by $\bm\mu_{t} = \bm\mu_{t}(\mathcal{B})$ the ground-truth metric at $t$-th iteration (the same format as in Eq~\eqref{eq:matrix_notation}).

From a practice viewpoint, till $t$-th round, the average historical metrics would be  $\frac{1}{t}\sum_{s=1}^{t}\bm\mu_{s}$
in expectation and $\hat{V}_t = \frac{1}{t} \sum_{s=1}^{t}\hat{U}_s$ the empirical observation. Given this, $f(\frac{1}{t}\sum_{s=1}^{t}\bm\mu_{s} \bp) $ represents the (ideal) objective value if one can re-select a solution $\bp$ at $t$-th interation given the full history of observations, and the regret can be defined as the gap between this reward objective with $\bp_t$ and the global optimal $\bp^{*}$, $f(\frac{1}{t}\sum_{s=1}^{t}\bm\mu_{s} \bp^{*}) - f(\frac{1}{t}\sum_{s=1}^{t}\bm\mu_{s} \bp_{t})$, where $\bp_t$ is from Algorithm~\ref{alg:TS-SCGD}. Similarly, the regret can be also defined as the gap $f(\frac{1}{t}\sum_{s=1}^{t}\bm\mu_{s} \bp^{*}) - f(\frac{1}{t}\sum_{s=1}^{t}\hat{V}_{s} \bp_{t})$ between the global optimal and the empirical loss, which is more close to the empirical regrets in the procedure of Algorithm~\ref{alg:TS-SCGD}. 

With all these notations, we show the following bounds on the total regret, 
\begin{theorem}\label{main:theorem_convex} (\textbf{Regret of Alg.~\ref{alg:TS-SCGD}})
For Algorithm~\ref{alg:TS-SCGD}, under Assumption~\ref{ass}, $\epsilon_t = \frac{G}{\sqrt{t}}$ and $\gamma = \frac{1}{\sqrt{T}}$, the regrets are:
\begin{align*}
&\max_{\bp \in \bar{S} } \sum_{t=1}^T f\left(\frac{1}{t}\sum_{s=1}^t \bm\mu_s\bp \right) - \sum_{t=1}^T\E\left[f\left(\frac{1}{t}\sum_{s=1}^t \bm\mu_s \bp_t  \right)\right]
\leq  \sqrt{T}\left(\ln K + 4eG^2  +\frac{32(G_f^2 + G_U^2L_f^2)G_U^2}{G}\right).
\end{align*}
and
\begin{align*}
&\max_{\bp \in \bar{S} } \sum_{t=1}^T f\left(\frac{1}{t}\sum_{s=1}^t \bm\mu_s\bp \right) - \sum_{t=1}^T\E f\left(\frac{1}{t}\sum_{s=1}^t \hat{U}_s \bp_t  \right) \leq \sqrt{T} \left(\ln K + 4eG^2 +\frac{32(G_f^2 + G_U^2L_f^2)G_U^2}{G} + 2G_U^{2}\right).
\end{align*} 
\end{theorem}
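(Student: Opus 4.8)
The plan is to read Algorithm~\ref{alg:TS-SCGD} as \emph{exact} entropic mirror descent (multiplicative weights) applied to the sequence of \emph{empirical} concave objectives $\hat F_t(\bp):=f(\hat V_t\bp)$, and then to transfer the resulting regret to the true objectives $F_t(\bp):=f\big(\tfrac1t\sum_{s=1}^t\bm\mu_s\,\bp\big)$. The key observation is that the update direction $\bm g_t=\hat V_t^\top\nabla f(\hat V_t\bp_t)$ in~\eqref{eq:gt} is precisely $\nabla\hat F_t(\bp_t)$, so with respect to $\hat F_t$ there is \emph{no} gradient bias even though $\hat V_t$ is only an estimate of $\tfrac1t\sum_s\bm\mu_s$. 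Since $f$ is concave and $\hat V_t\bp$ is linear in $\bp$, each $\hat F_t$ is concave, so for the comparator $\bp^\star:=\arg\max_{\bp\in\bar S}\sum_t F_t(\bp)$ the first-order inequality gives $\hat F_t(\bp^\star)-\hat F_t(\bp_t)\le\langle\bm g_t,\bp^\star-\bp_t\rangle$. I would therefore split the total regret into $\sum_t\langle\bm g_t,\bp^\star-\bp_t\rangle$ (the pure mirror-descent term) plus two transfer terms that swap $\hat V_t$ for $\tfrac1t\sum_s\bm\mu_s$ on the comparator side and on the iterate side.

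For the mirror-descent term I would run the standard exponential-weights potential argument on $\Phi_t=\ln\sum_k w_{t,k}$. Because the sampling pmf is smoothed, $\bp_t=(1-\epsilon_t)\bm w_{t-1}/\|\bm w_{t-1}\|_1+\epsilon_t\mathbf 1/K$, the coordinates stay bounded below by $\epsilon_t/K$; this caps the increments so that $e^{\gamma\bm g_{t,k}}\le 1+\gamma\bm g_{t,k}+(\gamma\bm g_{t,k})^2$ holds, which is where the factor $e$ enters. The resulting bound is $\sum_t\langle\bm g_t,\bp^\star-\bp_t\rangle\le \tfrac{\ln K}{\gamma}+\tfrac{\gamma}{2}\sum_t\|\bm g_t\|^2+O\big(\sum_t\epsilon_t\big)$, and with $\gamma=1/\sqrt T$, $\|\bm g_t\|\le G$, and $\epsilon_t=G/\sqrt t$ (so $\sum_t\epsilon_t=O(G\sqrt T)$) this contributes the $\sqrt T(\ln K+4eG^2)$ part of both displayed bounds.

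The crux is the two transfer terms, for which I would set $\Delta_t:=\hat V_t-\tfrac1t\sum_{s=1}^t\bm\mu_s$ and use that the inverse-propensity estimator~\eqref{eq:Uhat} is conditionally unbiased, $\E[\hat U_s\mid\mathcal F_{s-1}]=\bm\mu_s$, so $\E\hat V_t=\tfrac1t\sum_s\bm\mu_s$ and the $\Delta_t$ are built from martingale differences. On the comparator side, $\sum_t\big(F_t(\bp^\star)-\E\hat F_t(\bp^\star)\big)$ is the \emph{easy} direction: $\bp^\star$ is fixed, so the first-order term $\E\langle\nabla f(\cdot),\Delta_t\bp^\star\rangle$ vanishes and $L_f$-smoothness leaves only a second-order remainder $\tfrac{L_f}{2}\sum_t\E\|\Delta_t\bp^\star\|^2$. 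This is bounded using the variance estimate $\E\|\Delta_t\|_F^2=\tfrac1{t^2}\sum_s\E\|\hat U_s-\bm\mu_s\|_F^2$ (cross terms vanish by the stated independence), where the smoothing cap $p_{s,i}\ge\epsilon_s/K$ controls each per-round second moment; balancing the $1/t^2$ averaging gain against this importance-weight variance is exactly what the schedule $\epsilon_t=G/\sqrt t$ is tuned for, and it is what produces the summable rate and the constant $\tfrac{32(G_f^2+G_U^2L_f^2)G_U^2}{G}$ (with $\|\bm\mu\|_F\le G_U$ and $\|\nabla f\|\le G_f$). The genuinely hard term is the iterate side, $(C):=\sum_t\E\big[f(\hat V_t\bp_t)-f(\tfrac1t\sum_s\bm\mu_s\,\bp_t)\big]$, because here the random iterate $\bp_t$ is statistically coupled to the estimation noise $\Delta_t$, so the first-order contribution does \emph{not} simply vanish and a naive Cauchy--Schwarz bound $G_f\sqrt{\E\|\Delta_t\|_F^2}$ would only sum to $O(T^{3/4})$. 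I expect this to be the main obstacle: one must exploit the martingale structure of $\Delta_t$ together with the boundedness of the capped estimator (rather than its raw variance) to keep $(C)=O(G_U^2\sqrt T)$, which is precisely the extra $2G_U^2$ appearing in the second bound.

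Assembling the three pieces yields both inequalities: the first regret bound is the mirror-descent term plus the comparator-side transfer (realized loss uses $\tfrac1t\sum_s\bm\mu_s$), and the second adds the iterate-side gap $(C)$ to replace the realized loss by its empirical counterpart $f(\hat V_t\bp_t)$, which is why its right-hand side carries the additional $2G_U^2$. Finally, Theorem~\ref{main:theorem_convex_convergence} follows by specializing to a time-invariant $\bm\mu$ and applying Jensen's inequality to $\bar\bp_T=\tfrac1T\sum_t\bp_t$ via concavity of $f$, turning the $\sqrt T$ regret into the $1/\sqrt T$ convergence rate.
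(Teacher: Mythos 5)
Your reformulation of the update as exact entropic mirror ascent on the empirical objectives $\hat F_t(\bp):=f(\hat{V}_t\bp)$ is a correct and clean observation (indeed $\bg_t=\nabla\hat F_t(\bp_t)$ by Eq~\eqref{eq:gt}, so there is no bias with respect to $\hat F_t$), and your comparator-side transfer argument is sound. But the proposal has a genuine gap plus an assembly error. The gap: you never actually prove the bound on the iterate-side term $(C)=\sum_{t=1}^T\E\bigl[f(\hat{V}_t\bp_t)-f\bigl(\frac{1}{t}\sum_{s=1}^t\bmu_s\bp_t\bigr)\bigr]$; you correctly diagnose that naive Cauchy--Schwarz only yields $O(T^{3/4})$ because $\bp_t$ is coupled to the noise in $\hat{V}_t$, but you then merely conjecture that a martingale argument gives $O(\sqrt{T})$. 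This matters because, in your own decomposition, it is the \emph{first} displayed bound (loss measured under the true $\bmu_s$) that requires $(C)$: it equals comparator transfer plus the mirror-descent term plus $(C)$, so the first inequality of the theorem is left unproven. The assembly error is that you attribute $(C)$ to the \emph{second} bound, whereas under your decomposition the second bound (empirical loss $f(\frac{1}{t}\sum_{s=1}^t\hat{U}_s\bp_t)$) needs no iterate-side transfer at all --- it is exactly comparator transfer plus mirror descent --- so your explanation of the extra $2G_U^2$ contradicts your own setup.

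The paper takes a different route that sidesteps $(C)$ for the first bound entirely: it linearizes at the \emph{true} gradient, $f_t(\bp^*)-f_t(\bp_t)\le\langle\nabla f_t(\bp_t),\bp^*-\bp_t\rangle$ with $f_t(\bp)=f(\frac{1}{t}\sum_{s=1}^t\bmu_s\bp)$, splits off $\bg_t$, and controls the \emph{squared} gradient bias $\E\|\bg_t-\nabla f_t(\bp_t)\|^2$ via Lemma~\ref{lemma:grad_bias}, a bound of order $(\sum_{s\le t}\epsilon_s^{-1})/t^2\sim t^{-1/2}$ that holds unconditionally in expectation and is therefore immune to the $\bp_t$--$\hat{V}_t$ coupling; summing produces the $32(G_f^2+G_U^2L_f^2)G_U^2/G$ constant, and Lemma~\ref{lemma:key} supplies the $\ln K+4eG^2$ part exactly as your potential argument does. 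The second bound is then derived \emph{from} the first by adding the function-value gap $\sum_t\E\bigl[f(\frac{1}{t}\sum_{s=1}^t\bmu_s\bp_t)-f(\hat{V}_t\bp_t)\bigr]$, controlled by the same variance estimate on $\|\hat{V}_t-\frac{1}{t}\sum_{s=1}^t\bmu_s\|^2$ --- this is the true source of the extra $2G_U^2$. To salvage your route you would need to bound $(C)$ by essentially the same device (second-order expansion around the true average, with the cross term handled through the second-moment bound rather than first-order Cauchy--Schwarz), i.e., you would end up re-deriving the content of Lemma~\ref{lemma:grad_bias} anyway.
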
 
As Theorem~\ref{main:theorem_convex} suggests, our Algorithm~\ref{alg:TS-SCGD} can achieve a regret at the order of  $\mathcal{O}(\sqrt{T})$. We defer all the proof details in supplementary.

\section{Simulation Study}
\label{sec:simulation}
We conduct simulation to demonstrate the \textbf{gain} of probabilistic solution over single deterministic ones. For all experiments, we choose objective of the form $f(x, y_1, y_2, \ldots) = x - 5.0*\min(0,  y_1-c_1 )^{2} - 5.0*\min(0,  y_2-c_2 )^{2} - \ldots$ if there are multiple guardrail metrics $y$s. For Algorithm~\ref{alg:TS-SCGD}, we set $\epsilon_t = 0.1/\sqrt{t+10}$ and $\gamma = 0.1/K$.  Since our goal is to compare the performance of probabilistic solution against the deterministic one. Therefore, when reporting the performance of the  deterministic(single) solution, we always assume having access to the noise-less rewards and knowing which single strategic parameter yields the highest expected objective value, which will be the upper bound for \emph{any} deterministic parameter algorithm.

\noindent\textit{Parallel Querying}: in each round of Algorithm~\ref{alg:TS-SCGD}, we could choose $ Q\geq 1$ choices of strategic parameters by sampling from $\bp_t$. Practically,  we can randomly split the users or requests into $Q$ subgroups and apply one different strategic parameter for each group accordingly. 
%
\subsection{Simulation on Synthetic Data}
\label{sec:validation}
We first verify that the proposed Algorithm~\ref{alg:TS-SCGD} does converge empirically using the following setup:
\begin{enumerate}[leftmargin=*]
\item[(A)] the illustrating example in Section~\ref{sec:problem} where  $K=2$, $\bm\mu^{X} = [2, 0], \bm\mu^{Y} = [-2, 2]$  and $c=0$. If the $k$-th paramter is queried in one round, we add a Gaussian noise of $\mathcal{N}(0, 5.0)$ to $\mu^{X}_k$ (and $\mu^{Y}_k$ resp.) to simulate the noisy observations $\hat{u}^{X}_k$ and $\hat{u}^{Y}_k$. We set parallel querying $Q=1$.
\item[(B)] $K=100$ and $\mu^{X}_k$, $\mu^{Y_1}_k$,  $\mu^{Y_2}_k$ for $k=1,\ldots,K$ are sampled independently and uniformly within $[-1, 1]$. We set $c^{Y_1}=c^{Y_2} = 0.5$. An additive Gaussian noise $\mathcal{N}(0, \sigma^{2})$ is added to each round's observation if a strategic parameter is selected. We set parallel querying $Q=10$.
\end{enumerate}

\begin{figure}[!hbt] 
\centerline{\includegraphics[width=0.5\textwidth]{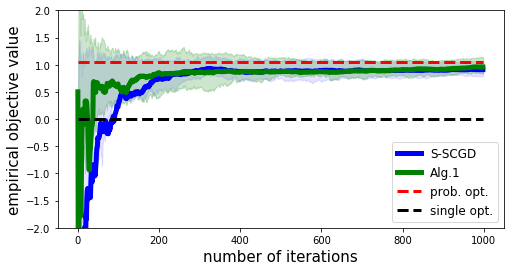}}
\caption{\small{ Empirical convergence of our proposed Algorithm 1 and S-SCGD on the example A. Shaded area indicate the 75th and 25th percentile of the objective in each round in $1000$ Monte Carlo runs.}}
\label{fig:convergence} 
\vspace*{-1ex}
\end{figure}
For the first toy example (A), we conduct $1000$ Monte Carlo runs and report the average empirical objective value. To evaluate the efficiency of the proposed approach,  we also compared against the stochastic compositional gradient descent approach \citep{wang2017stochastic} (S-SCGD). 
%
As shown in Figure~\ref{fig:convergence}, our proposed algorithm does converge to the correct optimal probabilistic solution. As one can easily verify from the simulation setting, this probabilistic reward is indeed better than the single parameter solution. We also note that the convergence of Algorithm 1 is faster than the simple S-SCGD due to its reduction of noise. We will only use proposed Algorithm~1 in later sections due to its efficiency and robustness to sparse observations. 

%
%
\begin{figure}[!hbt]
\centerline{\includegraphics[width=0.5\textwidth]{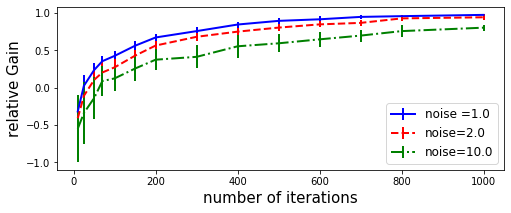}}
\caption{\small{ Relative gain of Algorithm~1 on synthetic dataset with different noise level $\sigma$. Average and standard deviation of 1000 random runs are reported.}}
\label{fig:convergencenoise} 
\vspace*{-1ex}
\end{figure}
For the second simulation setting (B), we also conduct 1000 Monte Carlo runs to generate ground-truth $\bm\mu$ and simulate our algorithm. To make it comparable across different ground-truth setting $\bm\mu$, we measure the relative gain of the probabilistic parameter over the single best parameter, defined as  $r:=\frac{|f(\bar{\bp}_t) - f(\bp_{S}^{*}) |}{|f({\bp}_{\bar{S}}^{*}) - f(\bp_{S}^{*}) |}$, i.e., the gain over single best parameter normalized by the ideal objective gap between probabilistic parameter and single parameter. $r>0$ indicates achieving a better objective than the single-best parameter. 
%
%
Figure~\ref{fig:convergencenoise} summarizes the average relative gain as the number of iteration for different noise levels. The error bars indicate the standard deviation of the 1000 Monte Carlo runs. We note that \texttt{TASCO} does converge and approaches the best possible gain. The noise of observation does impact the convergence speed of the approach. 
%
\subsection{Simulation on Real Dataset}
Motivated by \citep{milojkovic2019multi,ma2018entire}, we simulate on two publicly available datasets, Amazon Books \citep{ni2019justifying} and MovieLens 20M\footnote{\url{https://grouplens.org/datasets/movielens/20m/}}. Each of them has customer-item rating interactions as well as price, genre information.
We binaries the 5-star rating and set ratings $\geq 3$ as positive. Users and items with at least 5 ratings are kept in the processed data. Data were then split into training and test with $70\%, 30\%$.

We considered three targeted metrics: $1)$ the accuracy of relevance and measure it with the standard Recall@K (R@K) metrics, $2)$ the revenue achieved by ranking measured by the recall metrics weighted by the actual price of item (denoted as Revenue@K or REV@K) (for Amazon dataset),  and $3)$ the accuracy of predicting a specific category video of `Documentary' genre (for Movielens 20M dataset) (R-D@K). Higher metrics indicate better performance. We build Variational Auto-Encoder (VAE) \citep{liang2018variational} with 2 hidden layers using the training set as the underlying predictive model for each of the 3 targets. All the setups are summarized in Table~\ref{table:settingreal}.

\vspace*{-1ex}
\begin{table}[!htb]
\caption{{Setup of real world data set}  }
\label{table:settingreal}
\begin{center}
\begin{tabular}{l|c|c|c|c|c}
\toprule
Data & obj.1 & obj.2 & \#.users & \#.items & \#.events \\
\midrule
Amazon & R@20 & REV@20 &  $93,976$ & $25,896$ & $964,363$\\
ML20M & R@20 & R-D@20 & $132,580$ & $8,936$ &$6,316,389$ \\
\bottomrule
\end{tabular}
\end{center}
\vspace*{-2ex}
\end{table}
The simulation runs as following. For each dataset, objective 1 in Table~\ref{table:settingreal} is viewed as $x$ and objective 2 as $y$. The underlying VAE models we have built predict the scores $p_1(\text{item, user}), p_2(\text{item, user})$, these scores are then aggregated using a power-based function 
$
s = p_1(\text{item, user})^{\alpha} p_2(\text{item, user})^{1-\alpha}
$
to rank the items. 
Here the parameter $\alpha$ as our strategic parameter.

We discretize the parameter space of $\alpha$ into $K=100$ choices and set $c$ being the metric of objective 2 when $\alpha = 0.5$. To simulated the noise in the online reward collection regime, for each round $t$, we randomly split the testing data into 10 folds,  apply $Q=10$ parallel strategic parameters, and then compute the empirical average objective metrics on these random subsets of testing data.

\begin{table}[!htb]
\caption{{Results on real world data simulation. Threshold is 4.28 for REV@20 an 0.080 for R-D@20. }}
\label{table:simres}
\begin{center}
\begin{tabular}{l|cc|cc}
\toprule
 & \multicolumn{2}{c|}{Amazon Book} & \multicolumn{2}{c}{ML20M} \\
&  R@20         & Rev@20         &  R@20         &    R-D@20 \\
\midrule
Probabilistic. &  $\bf{0.284} $  & ${4.31}$ & $\bf{0.424}$  & ${0.082}$    \\
Single best    &  $0.279$   & $4.31$ & $0.405$  & $0.084$    \\
Single goal    &  $0.286$   &   --    &   $0.430$  &    --     \\
\bottomrule
\end{tabular}
\end{center}
\end{table}
We report the final rewards and constraints metrics in Table~\ref{table:simres}. The single best parameter, as discussed before, is identified by iterating through all possible choices using the entire testing set. We also report the reward by having non-constrained single obj.1 to understand the upper limit of our prediction models. Table~\ref{table:simres} shows our probabilistic solution can achieve overall better results over the deterministic best parameter choices.
%
\section{Real-World Application}
\label{sec:online}
We present two industrial applications on a leading social networking platform with hundreds of millions of daily active users and the AB-testing results.  
\begin{table}[!hbt]
\caption{AB Test Result of Online Content Recommendation Task, Watch time is primary reward we would like to optimize. Like and Sharing are two guardrail constraints. A soft threshold of $-2.0\%$ was imposed on each of the constraint metrics. All results are reported as lift w.r.t. the baseline model at the time of experiment. }
\label{table:online1}
\begin{center}
\begin{tabular}{l|c|c|c}
\toprule
 & time & like  & sharing  \\
\midrule
Single Best & $+0.42\%$ & $-1.67\%$  & $-1.31\%$\\
Probabilistic &  $\bf{+0.64\%}$ & $\bf{+0.33\%}$ &  $\bf{+0.54\%}$ \\
\bottomrule
\end{tabular}
\end{center}
\vglue -2ex
\end{table}

\noindent \textbf{Ensemble Sort in Content Recommendation }
We consider the content ranking in one of the recommendation scenarios. The strategic parameters are ensemble weights of multiple recommendation queues, each optimized for a particular customer target events. This is similar to the illustrating example in Figure~\ref{fig:example}.  The goal is to increase the average user engagement (time) while not dropping the `like' action rate and the `share' action rate. The existing baseline parameter is a deterministic parameter and is extensively optimized. The AB testing results are outlined in Table~\ref{table:online1}. Note that the probabilistic setting improves not only the primary metrics but also the constraints metrics. This is due to the fact that some of the parameters in our probabilistic mix do yield higher gain in the `like' and `sharing' metrics. 

\noindent \textbf{Quota-Based Ads Retrieval Systems}
We also present real-world testing results in an ads-retrieval system of the platform. 
In this system, there are multiple modules, each attempting to retrieve a set of relevant ad contents with different types. One needs to combine these candidate sets into a single and smaller group to feed the downstream ad ranking and pricing models. Due to the limitation on computation time and power,  quotas need to be set on the maximum number of ad contents generated by each module. Our goal here is to improve the overall advertising revenue and not to hurt a number of specific categories such as cold-start ad content. 

We implement our probabilistic parameter solution in this ads system and compare it against an existing single-parameter baseline that has been extensively optimized.  In our online AB test, the algorithm achieves a revenue improvement of $+1.7\%$ without significantly hurting the imposed constraints. 

\section{Conclusion}
This paper argues that the probabilistic strategic parameter achieves better rewards than the deterministic parameter solution. We present an algorithm (\texttt{PASTO}) based on stochastic gradient descent to solve the probabilistic solution with theoretical guarantees.  Both simulation and online applications have shown improvement over the deterministic best arm choice.

\bibliography{ref1}  

\newpage 
\appendix
\section{Extension to Continuous Parameter Space}
We briefly discuss the extension of our formulation to the case when the parameter spaces are continuous. We first revisit the optimization problems in discrete space in Eq~\eqref{eq:formulation_single} and Eq~\eqref{eq:formulation_prob}. We rewrite the equations in the generic inner-product form as:

(for the deterministic parameter in discrete space)
\begin{equation*}
\begin{aligned}
    \max_{{\mathbf{ p }}} \quad & f\left(\langle \boldsymbol{\mu}^{X}, \textbf{p} \rangle, \langle \boldsymbol\mu^{Y}, \textbf{p}\rangle\right) = f\left( \sum_{\beta \in \mathcal{B}}\mu^{X}({\beta}) p(\beta),  \sum_{\beta \in \mathcal{B}}\mu^{Y}({\beta}) p(\beta) \right)
    \\ \text{s.t.} \quad & \textbf{p} \in S:=\left\{ \mathbf{p} ~| ~  p(\beta) \in \{0, 1\}, \forall \beta \in \mathcal{B}, \sum_{\beta \in \mathcal{B}} p(\beta) =1\right\}.
\end{aligned}
\end{equation*}

(for the probabilistic parameter in discrete space)
\begin{equation*}
\begin{aligned}
     \max_{{\mathbf{ p }}} \quad & f\left(\langle \boldsymbol\mu^{X}, \textbf{p} \rangle, \langle \boldsymbol\mu^{Y}, \textbf{p}\rangle \right) = f\left( \sum_{\beta \in \mathcal{B}}\mu^{X}({\beta}) p(\beta),  \sum_{\beta \in \mathcal{B}}\mu^{Y}({\beta}) p(\beta) \right)
    \\ \text{s.t.} \quad & \textbf{p} \in \bar{S} := \left\{ \mathbf{p} ~| ~p(\beta) \in [0, 1], \forall \beta \in \mathcal{B} , \sum_{\beta \in \mathcal{B}} p(\beta) =1\right\}
\end{aligned}
\end{equation*}

In the above formulation for the discrete case, the inner product $\langle \boldsymbol\mu^{X},\textbf{p}\rangle$ is between vectors $\boldsymbol\mu$ and $\mathbf{p}$ and can be expressed as a summation over a finite number of items. When $\mathcal{B}$ is a continuous space, we can similarly define $\boldsymbol\mu = \mu(\beta)$ as a real-valued function over $\beta \in \mathcal{B}$ and $\textbf{p} = {p}(\beta)$ as a PDF over $\mathcal{B}$. The above form of inner product  still hold but should be interpreted as the inner product between two real-valued functions, that is the summation of an infinite number of items or integration, specifically,

(for the deterministic parameter in continuous  space)
\begin{equation*}
\begin{aligned}
    \max_{\mathbf{p}} \quad & f\left(\langle \boldsymbol\mu^{X}, \textbf{p}\rangle, \langle \boldsymbol\mu^{Y}, \textbf{p}\rangle \right) = f\left(\int_{\beta \in \mathcal{B}} \mu^{X}(\beta) p(\beta) d\beta, \int_{\beta \in \mathcal{B}} \mu^{Y}(\beta) p(\beta) d\beta \right)
    \\ \text{s.t.} \quad & \textbf{p} \in {S} := \left\{{p}(\beta) \geq 0, \forall \beta \in \mathcal{B} ~|~ {p}(\beta) = \delta_{\beta_0}(\beta), \forall \beta_0 \in \mathcal{B} \right\}
\end{aligned}
\end{equation*}
where $\delta_{\beta_0}(\cdot)$ is the standard delta function defining at $\beta_0 \in \mathcal{B}$ satisfying 
$$
\delta_{\beta_0}(\beta) = \begin{cases}
  \infty, \beta = \beta_0 \\
  0, \text{otherwise}
\end{cases} 
\quad \text{and} \quad
\int_{\beta\in\mathcal{B}} \delta_{\beta_0}(\beta)d\beta =1
$$

(for the probabilistic parameter in continuous  space)
\begin{equation*}
\begin{aligned}
    \max_{{ p}} \quad & f\left(\langle \boldsymbol\mu^{X}, \textbf{p}\rangle, \langle \boldsymbol\mu^{Y}, \textbf{p}\rangle \right) = f\left(\int_{\beta \in \mathcal{B}} \mu^{X}(\beta) p(\beta) d\beta, \int_{\beta \in \mathcal{B}} \mu^{Y}(\beta) p(\beta) d\beta \right)
    \\ \text{s.t.} \quad & \textbf{p} \in \bar{S} := \left\{{p}(\beta) \geq 0, \beta \in \mathcal{B}, ~|~ \int_{\beta\in \mathcal{B}}p(\beta)d\beta =1 \right\}.
\end{aligned}
\end{equation*}

In sum, we would argue that the continuous space do share the same formulation as the discrete space. The gain of probabilistic parameters still holds in the continuous parameter space case.

\section{Proofs for Theorem~\ref{main:theorem_convex_convergence}  and Theorem~\ref{main:theorem_convex} }

We first provide proof for Theorem~\ref{main:theorem_convex} on the regret bound of our proposed \ouralg ~ in Algorithm ~\ref{alg:TS-SCGD} since it requires a more general condition. We then turn to the convergence bound in Theorem~\ref{main:theorem_convex_convergence}. 

We begin by establishing a few technical lemmas. 
\begin{lemma}\label{lemma:grad_bias}
In \ouralg~ where $\bm{g}_t = \hat{V}_t^\top \nabla f(\hat{V}_t \bp_t)$ (as in Eq~\eqref{eq:gt}), we have,
\begin{align*}
\E\left\|\bg_t - \frac{\bm{d}}{\bm{dp}}f\left(\frac{1}{t}\sum_{s} \bmu_s \bp_t\right)\right\|^2\leq \frac{2\left(G_f^2 + G_U^2L_f^2\right)G_U^2\left(\sum_{s=1}^t\frac{1}{\epsilon_s}\right)}{ t^2}.
\end{align*}
\end{lemma}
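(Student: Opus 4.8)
The plan is to show that the approximate gradient $\bg_t=\hat{V}_t^\top\nabla f(\hat{V}_t\bp_t)$ tracks the exact gradient of the time-averaged objective, and that the whole discrepancy is governed by how far the running average of estimators $\hat{V}_t$ sits from the running average of ground truths $\bar{\bmu}_t:=\frac1t\sum_{s=1}^t\bmu_s$. By the chain rule the target is $\frac{\bm d}{\bm{dp}}f(\bar{\bmu}_t\bp_t)=\bar{\bmu}_t^\top\nabla f(\bar{\bmu}_t\bp_t)$, so I would begin from the pathwise algebraic identity
\begin{align*}
\bg_t-\bar{\bmu}_t^\top\nabla f(\bar{\bmu}_t\bp_t)=(\hat{V}_t-\bar{\bmu}_t)^\top\nabla f(\hat{V}_t\bp_t)+\bar{\bmu}_t^\top\left[\nabla f(\hat{V}_t\bp_t)-\nabla f(\bar{\bmu}_t\bp_t)\right],
\end{align*}
which holds for every realization and isolates an ``estimation'' term and a ``smoothness'' term.

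Next I would bound the two terms separately. The first is controlled by the gradient bound, $\|(\hat{V}_t-\bar{\bmu}_t)^\top\nabla f(\hat{V}_t\bp_t)\|\le G_f\,\|\hat{V}_t-\bar{\bmu}_t\|_F$. For the second, $L_f$-smoothness gives $\|\nabla f(\hat{V}_t\bp_t)-\nabla f(\bar{\bmu}_t\bp_t)\|\le L_f\|(\hat{V}_t-\bar{\bmu}_t)\bp_t\|\le L_f\|\hat{V}_t-\bar{\bmu}_t\|_F$, using $\|\bp_t\|_2\le\|\bp_t\|_1=1$ on the simplex, while $\|\bar{\bmu}_t\|_F\le\frac1t\sum_s\|\bmu_s\|_F\le G_U$ by the triangle inequality and the Frobenius assumption. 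Combining through $\|a+b\|^2\le 2\|a\|^2+2\|b\|^2$ reduces the entire quantity to
\begin{align*}
\left\|\bg_t-\bar{\bmu}_t^\top\nabla f(\bar{\bmu}_t\bp_t)\right\|^2\le 2\left(G_f^2+G_U^2L_f^2\right)\|\hat{V}_t-\bar{\bmu}_t\|_F^2 .
\end{align*}

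Finally I would take expectations and control the mean-square error of the running average. Writing $\hat{V}_t-\bar{\bmu}_t=\frac1t\sum_{s=1}^t(\hat{U}_s-\bmu_s)$ and using that each increment has mean zero ($\E\hat{U}_s=\bmu_s$) together with the assumed cross-iteration independence of the $\hat{U}_s$, all cross terms vanish, leaving $\E\|\hat{V}_t-\bar{\bmu}_t\|_F^2=\frac1{t^2}\sum_{s=1}^t\E\|\hat{U}_s-\bmu_s\|_F^2\le\frac1{t^2}\sum_{s=1}^t\E\|\hat{U}_s\|_F^2$. The remaining piece is the per-iteration variance of the importance-weighted estimator $\hat{U}_s$ from \eqref{eq:Uhat}: since $\hat{U}_s$ is supported on the single sampled column $i_s$ and carries the inverse-probability weight $1/p_{s,i_s}$, its second moment scales like the reciprocal of the smallest sampling mass, which the smoothing step forces to satisfy $p_{s,k}\ge\epsilon_s/K$; this gives $\E\|\hat{U}_s\|_F^2\le G_U^2/\epsilon_s$ and hence the stated bound $\frac{2(G_f^2+G_U^2L_f^2)G_U^2(\sum_{s=1}^t 1/\epsilon_s)}{t^2}$.

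The main obstacle is precisely this last estimate, $\E\|\hat{U}_s\|_F^2\le G_U^2/\epsilon_s$: it is the only place the exploration parameter enters, and the importance-weighting blow-up $1/p_{s,i_s}$ must be tamed by the smoothing floor $\epsilon_s$, so the careful accounting of the column second moments against the minimum sampling probability (and their absorption into $G_U$) is what fixes the final constant. A secondary subtlety in the variance step is that $\bp_t$ is history-dependent, so the mean-zero/cross-term cancellation must lean on the \emph{assumed} independence of the $\hat{U}_s$ across iterations rather than on genuine independence of the algorithm's iterates.
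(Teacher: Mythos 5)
Your proposal is correct and follows essentially the same route as the paper's own proof: the identical splitting of $\bg_t-\bar{\bmu}_t^\top\nabla f(\bar{\bmu}_t\bp_t)$ into an estimation term bounded by $G_f\|\hat{V}_t-\bar{\bmu}_t\|_F$ and a smoothness term bounded by $G_U L_f\|\hat{V}_t-\bar{\bmu}_t\|_F$, combined via $\|a+b\|^2\le 2\|a\|^2+2\|b\|^2$, with cross terms killed by conditional unbiasedness/independence and the per-step second moment of the importance-weighted $\hat{U}_s$ controlled by the exploration floor. The one caveat is shared with the paper itself: your floor $p_{s,k}\ge\epsilon_s/K$ strictly yields $\E\|\hat{U}_s\|_F^2\le K G_U^2/\epsilon_s$ rather than $G_U^2/\epsilon_s$, but the paper's proof commits the identical simplification (bounding $1/p_i(s)$ by $1/\epsilon_s$), so your argument matches it constant for constant.
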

\begin{proof}
Notice that we have $\E_t \hat{U}_t = \bmu_t$, which means
\begin{align*}
&\E\left\|\frac{1}{t}\sum_{s=1}^t\left(\bmu_s - \hat{U}_s\right)  \right\|^2\\ 
= & \frac{1}{t^2}\sum_{s=1}^t\E\left\|\bmu_s - \hat{U}_s\right\|^2.
\end{align*}
In order to upper bound $\E\|\bmu_s - \hmu_s\|^2$, we recall the technical conditions in Assumption \ref{ass}
\begin{align*}
\E\left\|\bmu_s - \hat{U}_s\right\|^2 \leq \sum_{i=1}^K p_i(s)\left\|\frac{\hat{U}_s^{(i)} - \bmu_{s}^{(i)} }{p_i(s)}  \right\|^2 \leq \sum_{i=1}^K \frac{\left\|\hat{U}_s^{(i)}\right\|^2 + \left\|\bmu_{s}^{(i)}\right\|^2 }{p_i(s)}\leq 2\frac{G_U^2}{\epsilon_s}, 
\end{align*}
where the last inequality is true since only one of the $K$ terms in the summation is non-zero.  Now, recall that $\widehat{V}_t:=\frac{1}{t}\sum_{s=1}^t\hmu_s$ and analogously let  $V_t := \frac{1}{t}\sum_{s=1}^t\bmu_s$, the inequality above would  leads to
\begin{align*}
\E\left\|\widehat{V}_t - V_t  \right\|^2\leq \frac{G_U^2\left(\sum_{s=1}^t\frac{1}{\epsilon_s}\right)}{t^2}.
\end{align*}
With this , we can decompose and bound the  difference of $\bg_t$ and $\nabla f_t\left(\bp_t \right)$ as
\begin{align*}
\E\left\|\bg_t -  \frac{\bm{d}}{\bm{dp}}f\left(\frac{1}{t}\sum_{s} \bmu_s \bp_t\right)\right\|^2 =& \E\left\| \widehat{V}_t\nabla f\left(\widehat{V}_t\bp_t\right) - V_t\nabla f(V_t\bp_t)   \right\|^2\\
= & \E\left\| \left(\widehat{V}_t - V_t\right)\nabla f\left(\widehat{V}_t\bp_t\right) - V_t\left(\nabla f(V_t\bp_t) - \nabla f\left(\widehat{V}_t\bp_t\right)\right)  \right\|^2\\
\leq & 2\E\left\| \left(\widehat{V}_t - V_t\right)\nabla f\left(\widehat{V}_t\bp_t\right) \right\|^2 + 2\E\left\|  V_t\left(\nabla f(V_t\bp_t) - \nabla f\left(\widehat{V}_t\bp_t\right)\right)  \right\|^2\\
\leq & 2G_f^2\E\left\| \widehat{V}_t - V_t\right\|^2 + 2G_U^2\E\left\|  \nabla f(V_t\bp_t) - \nabla f\left(\widehat{V}_t\bp_t\right)  \right\|^2\\
\leq &2G_f^2\E\left\| \widehat{V}_t - V_t\right\|^2 + 2G_U^2L_f^2\E\left\|V_t\bp_t - \widehat{V}_t \bp_t \right\|^2\\
\leq &2G_f^2\E\left\| \widehat{V}_t - V_t\right\|^2 + 2G_U^2L_f^2\E\left\|V_t - \widehat{V}_t \right\|^2\\
\leq & \frac{2(G_f^2 + G_U^2L_f^2)G_U^2\left(\sum_{s=1}^t\frac{1}{\epsilon_s}\right)}{ t^2}.
\end{align*}
\end{proof}

\begin{lemma}\label{lemma:grad_bias_iid}
In \ouralg~ algorithm, if $\bmu_t$ are sampled independently from the same distribution, i.e. $\E\bmu_t = \bmu,\,\forall t$,  then we have
\begin{align*}
\E\left\|\bg_t - \nabla f(\bp_t)\right\|^2\leq \frac{2\left(G_f^2 + G_U^2L_f^2\right)G_U^2\left(\sum_{s=1}^t\frac{1}{\epsilon_s}\right)}{ t^2}.
\end{align*}
\end{lemma}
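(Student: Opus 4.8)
The plan is to reproduce the argument of Lemma~\ref{lemma:grad_bias} almost verbatim, exploiting the fact that in the i.i.d. regime the common mean $\bmu$ now plays exactly the role that the running average $V_t := \frac{1}{t}\sum_{s=1}^t \bmu_s$ played before. Concretely, I read $\nabla f(\bp_t)$ in the statement as the gradient of the stationary (population) objective, i.e.\ $\frac{\bm{d}}{\bm{dp}} f(\bmu\bp_t) = \bmu^\top \nabla f(\bmu\bp_t)$; the only structural change from Lemma~\ref{lemma:grad_bias} is that the target of comparison is the fixed matrix $\bmu$ rather than the data-dependent $V_t$.

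First I would record the unbiasedness and control the variance of $\hat{V}_t$ around $\bmu$. Since $\E\hat{U}_s = \bmu$ for every $s$ in the i.i.d.\ case, we have $\E\hat{V}_t = \bmu$, and because the observations $\{\hat{U}_s\}_s$ across iterations are independent (the hypothesis of Theorem~\ref{main:theorem_convex}), the cross terms vanish and
\[
\E\left\|\hat{V}_t - \bmu\right\|^2 = \frac{1}{t^2}\sum_{s=1}^t \E\left\|\hat{U}_s - \bmu\right\|^2 .
\]
This is the exact analogue of the identity $\E\|\hat{V}_t - V_t\|^2 = \frac{1}{t^2}\sum_s \E\|\bmu_s-\hat{U}_s\|^2$ used previously, but now centered at the constant $\bmu$. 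Each term is then bounded via $\E\|\hat{U}_s - \bmu\|^2 \le \E\|\hat{U}_s\|^2$ and the importance-weighted structure of $\hat{U}_s$ (a single nonzero column reweighted by $1/p_{s,i_s}$, with $\|\bmu_s\|_F\le G_U$ almost surely and the exploration floor $p_{s,i}\ge \epsilon_s/K$ guaranteed by $\bp_t$), giving $\E\|\hat{U}_s\|^2 \le 2G_U^2/\epsilon_s$, precisely the per-term bound from Lemma~\ref{lemma:grad_bias}. Summing yields $\E\|\hat{V}_t - \bmu\|^2 \le G_U^2\big(\sum_{s=1}^t 1/\epsilon_s\big)/t^2$.

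Finally I would run the identical gradient decomposition,
\[
\bg_t - \bmu^\top\nabla f(\bmu\bp_t) = \left(\hat{V}_t - \bmu\right)\nabla f(\hat{V}_t\bp_t) - \bmu\left(\nabla f(\bmu\bp_t) - \nabla f(\hat{V}_t\bp_t)\right),
\]
then apply $\|a+b\|^2\le 2\|a\|^2+2\|b\|^2$, the gradient bound $\|\nabla f\|\le G_f$, the norm bound $\|\bmu\|_F\le G_U$, and $L_f$-smoothness, exactly as in the chain of inequalities of Lemma~\ref{lemma:grad_bias}, to obtain $\E\|\bg_t - \bmu^\top\nabla f(\bmu\bp_t)\|^2 \le 2(G_f^2+G_U^2L_f^2)\,\E\|\hat{V}_t-\bmu\|^2$. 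Combining with the previous step gives the claimed bound with the same constants.

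The main obstacle -- which is really a matter of care rather than a genuine difficulty -- is justifying the per-term second-moment bound $\E\|\hat{U}_s\|^2\le 2G_U^2/\epsilon_s$ now that $\bmu_s$ is itself random: one must take expectation over both the sampling of $i_s\sim\bp_s$ and the draw of $\bmu_s$, and verify that the almost-sure bound $\|\bmu_s\|_F\le G_U$ together with the exploration floor still controls the reweighted estimator. The only other point to check is that the \emph{independence} of the $\hat{U}_s$ (rather than the weaker conditional structure used implicitly in Lemma~\ref{lemma:grad_bias}) is what allows centering at the fixed $\bmu$ instead of the empirical $V_t$; everything downstream is then identical.
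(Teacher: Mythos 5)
Your proposal is correct and follows the paper's own proof essentially verbatim: the same centering of $\hat{V}_t$ at the fixed mean $\bmu$ with cross terms vanishing across iterations, the same importance-weighting bound on the per-step second moment, and the identical decomposition $\bg_t - \bmu^\top\nabla f(\bmu\bp_t) = (\hat{V}_t-\bmu)\nabla f(\hat{V}_t\bp_t) - \bmu\left(\nabla f(\bmu\bp_t)-\nabla f(\hat{V}_t\bp_t)\right)$ combined with $G_f$-boundedness and $L_f$-smoothness. One small correction: your per-term bound $\E\|\hat{U}_s-\bmu\|^2\leq 2G_U^2/\epsilon_s$ carries a superfluous factor of $2$ (imported from Lemma~\ref{lemma:grad_bias}, where the target $\bmu_s$ is random) that would double the final constant to $4(G_f^2+G_U^2L_f^2)G_U^2$; in the i.i.d.\ case the paper uses $\E\|\hat{U}_s-\bmu\|^2\leq\E\|\hat{U}_s\|^2\leq G_U^2/\epsilon_s$ directly, which is exactly what recovers the stated constant, so you should drop the $2$ as your own inequality $\E\|\hat{U}_s-\bmu\|^2\leq\E\|\hat{U}_s\|^2$ already permits.
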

\begin{proof} (The result can also be obtained as a special case of Lemma~\ref{lemma:grad_bias}.  Notice that we have $\E_t \hmu_t = \bmu$, which means
\begin{align*}
&\E\left\|\frac{1}{t}\sum_{s=1}^t(\bmu - \hmu_s)  \right\|^2\\
= & \frac{1}{t^2}\sum_{s=1}^t\E\|\bmu - \hmu_s\|^2.
\end{align*}
In order to upper bound $\E\|\bmu - \hmu_s\|^2$, we have
\begin{align*}
\E\|\bmu - \hmu_s\|^2 \leq \sum_{i=1}^K p_i(s)\left\|\frac{\hmu_s^{(i)}}{p_i(s)}  \right\|^2 = \sum_{i=1}^K \frac{\left\|\hmu_s^{(i)}\right\|^2}{p_i(s)}\leq \frac{G_U^2}{\epsilon_s}, 
\end{align*}
Denote  $\widehat{V}_t:=\frac{1}{t}\sum_{s=1}^t\hmu_s$, the inequality above would  leads to
\begin{align*}
\E\left\|\widehat{V}_t - \bmu  \right\|^2\leq \frac{G_U^2\left(\sum_{s=1}^t\frac{1}{\epsilon_s}\right)}{t^2}.
\end{align*}
The difference between $\bg_t$ and $\nabla f_t\left(\bp_t \right)$ can be decomposed by
\begin{align*}
\E\|\bg_t - \nabla f_t(\bp_t)\|^2 =& \E\left\| \widehat{V}_t\nabla f\left(\widehat{V}_t\bp_t\right) - \bmu\nabla f(\bmu\bp_t)   \right\|^2\\
= & \E\left\| \left(\widehat{V}_t - \bmu\right)\nabla f\left(\widehat{V}_t\bp_t\right) - \bmu\left(\nabla f(\bmu\bp_t) - \nabla f\left(\widehat{V}_t\bp_t\right)\right)  \right\|^2\\
\leq & 2\E\left\| (\widehat{V}_t - \bmu)\nabla f(\widehat{V}_t\bp_t) \right\|^2 + 2\E\left\|  \bmu\left(\nabla f(\bmu\bp_t) - \nabla f\left(\widehat{V}_t\bp_t\right)\right)  \right\|^2\\
\leq & 2G_f^2\E\left\| \widehat{V}_t - \bmu\right\|^2 + 2G_U^2\E\left\|  \nabla f(\bmu\bp_t) - \nabla f\left(\widehat{V}_t\bp_t\right)  \right\|^2\\
\leq &2G_f^2\E\left\| \widehat{V}_t - \bmu\right\|^2 + 2G_U^2L_f^2\E\left\|\bmu\bp_t - \widehat{V}_t \bp_t \right\|^2\\
\leq &2G_f^2\E\left\| \widehat{V}_t - \bmu\right\|^2 + 2G_U^2L_f^2\E\left\|\bmu - \widehat{V}_t \right\|^2\\
\leq & \frac{2\left(G_f^2 + G_U^2L_f^2\right)G_U^2\left(\sum_{s=1}^t\frac{1}{\epsilon_s}\right)}{ t^2},
\end{align*}
completing the proof.
\end{proof}

\begin{lemma}\label{lemma:key}
For \ouralg, for any probability distribution $\bp^* \in \bar{S}$ in the $K$-dimensional probabilistic simplex, we have for any $t$
\begin{align*}
\sum_{t=1}^T\E\langle \bg_t,\bp^* - \bp_t\rangle \leq  \frac{\ln K}{\gamma} + 4\gamma(e - 2)\sum_{t=1}^T\E\|\bg_t\|^2 + 2\sum_{t=1}^T\epsilon_t\E\langle \bg_t,\bp_t\rangle.
\end{align*}
\end{lemma}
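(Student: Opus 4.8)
The plan is to run the standard multiplicative-weights (Hedge) potential argument, adapted to the uniform-smoothed sampling rule of Algorithm~\ref{alg:TS-SCGD}. I would first fix the realized gradient sequence and keep the whole argument pathwise, taking the expectation only at the very end, so that the $\E$'s on both sides are obtained for free. Write $\bm{q}_t := \bm{w}_{t-1}/\|\bm{w}_{t-1}\|_1$ for the un-smoothed normalized weights and $W_t := \|\bm{w}_t\|_1$ for the potential, so that the sampling distribution is $\bp_t = (1-\epsilon_t)\bm{q}_t + (\epsilon_t/K)\mathbf{1}$ and, since $\bm{w}_0 = \mathbf{1}$, we have $w_{T,i} = \exp(\gamma\sum_{t=1}^T g_{t,i})$ and $W_0 = K$.

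The core step is to control the one-round potential ratio. Because $W_t/W_{t-1} = \sum_i q_{t,i}\exp(\gamma g_{t,i})$, I would apply the elementary inequality $e^x \le 1 + x + (e-2)x^2$ (valid once $\gamma g_{t,i}\le 1$, which holds under the prescribed step size together with $\|\bg_t\|\le G$) to get $W_t/W_{t-1}\le 1 + \gamma\langle\bm{q}_t,\bg_t\rangle + (e-2)\gamma^2\sum_i q_{t,i}g_{t,i}^2$. Taking logs with $\ln(1+x)\le x$ and telescoping over $t$ bounds $\ln(W_T/W_0)$ from above. For the matching lower bound I would use the log-sum-exp (Gibbs) inequality, $\ln W_T = \ln\sum_i\exp(\gamma\sum_t g_{t,i}) \ge \gamma\langle\bp^*,\sum_t\bg_t\rangle$ for any $\bp^*\in\bar S$ (the linear functional is maximized at a vertex of the simplex), together with $\ln W_0 = \ln K$. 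Rearranging and dividing by $\gamma$ yields the idealized regret against the un-smoothed iterate, $\sum_t\langle\bp^*-\bm{q}_t,\bg_t\rangle \le \frac{\ln K}{\gamma} + (e-2)\gamma\sum_t\sum_i q_{t,i}g_{t,i}^2$.

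It remains to pass from $\bm{q}_t$ to the distribution $\bp_t$ that actually appears in the lemma. Using the smoothing identity $\bm{q}_t - \bp_t = \epsilon_t(\bm{q}_t - \tfrac1K\mathbf{1})$, I would split $\langle\bp^*-\bp_t,\bg_t\rangle = \langle\bp^*-\bm{q}_t,\bg_t\rangle + \epsilon_t\langle\bm{q}_t-\tfrac1K\mathbf{1},\bg_t\rangle$ and bound the correction using the componentwise estimate $q_{t,i}\le p_{t,i}/(1-\epsilon_t)\le 2p_{t,i}$ (valid whenever $\epsilon_t\le 1/2$); this is what converts the correction into the $2\sum_t\epsilon_t\langle\bg_t,\bp_t\rangle$ term. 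The same estimate turns the second-order remainder into $\sum_i q_{t,i}g_{t,i}^2\le 2\sum_i p_{t,i}g_{t,i}^2\le 2\|\bg_t\|^2$, which together with the $(e-2)\gamma$ prefactor accounts for the $4\gamma(e-2)\sum_t\|\bg_t\|^2$ term. Taking expectations then finishes the proof.

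I expect the conversion in the last paragraph to be the main obstacle: one has to push the uniform-smoothing correction through while tracking the (not necessarily sign-definite) entries of $\bg_t$, so that the leftover pieces collapse to \emph{precisely} $2\epsilon_t\langle\bg_t,\bp_t\rangle$ and $4\gamma(e-2)\|\bg_t\|^2$ rather than to looser $\epsilon_t\|\bg_t\|$-type quantities. The only other point requiring care is verifying the admissibility condition $\gamma g_{t,i}\le 1$ needed for the second-order exponential inequality, which follows from $\|\bg_t\|\le G$ in Assumption~\ref{ass} and the chosen step size.
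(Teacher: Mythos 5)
Your proposal is correct and takes essentially the same route as the paper's own proof: both run the EXP3-style potential argument on $W_t=\|\bm{w}_t\|_1$ using $e^x\le 1+x+(e-2)x^2$ under $\gamma g_{t,i}\le 1$, the Gibbs lower bound $\ln W_T \ge \gamma\langle \bp^*,\textstyle\sum_t \bg_t\rangle$, and the smoothing relation between the un-smoothed weights and $\bp_t$ (the paper merely substitutes $\omega_k(t)/W(t)=(p_{t,k}-\epsilon_t/K)/(1-\epsilon_t)$ directly inside the ratio bound and uses $\tfrac{1}{1-\epsilon_t}\le 1+4\epsilon_t$, rather than converting after the fact as you do). The sign caveat you flag at the end --- that dropping the $-\epsilon_t/K$ correction and passing from $\bm{q}_t$ to $\bp_t$ tacitly requires nonnegative gradient coordinates --- is present, equally implicitly, in the paper's proof, so your argument matches it in both structure and level of rigor.
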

\begin{proof}
Denote $W(t) := \sum_{k=1}^K  \omega_k(t)$, we have 
\begin{align*}
\frac{W(t+1)}{W(t)} =& \sum_{k=1}^K\frac{\omega_k(t+1)}{W(t)}\\
= & \sum_{k=1}^K\frac{\omega_k(t)}{W(t)}\exp{\left(\gamma\g_k(t)\right)}\\
= & \sum_{k=1}^K\frac{p_k(t) - \epsilon_t/K}{1-\epsilon_t}\exp{\left(\gamma\g_k(t)\right)}.
\end{align*}
If $\gamma\g_k(t)\leq 1$, then we get
\begin{align*}
\frac{W(t+1)}{W(t)} \leq& \sum_{k=1}^K\frac{p_k(t) -\epsilon_t/K}{1-\epsilon_t}\left(1+ \gamma\g_k(t) + (e-2)\gamma^2\left(\hg_k(t)\right)^2\right)\\
= & 1 + \sum_{k=1}^K\frac{p_k(t) -\epsilon_t/K}{1-\epsilon_t}\left( \gamma\g_k(t) + (e-2)\gamma^2\left(\hg_k(t)\right)^2\right)\\
\leq & 1 + \frac{\gamma}{1-\epsilon_t}\sum_{k=1}^K p_k(t)\hg_k(t) + \frac{\gamma^2(e-2)}{1-\epsilon_t}\sum_{k=1}^K p_k(t)\left(\hg_k(t)\right)^2\\
\leq &1 + \frac{\gamma}{1-\epsilon_t}\sum_{k=1}^K p_k(t)\hg_k(t) + \frac{\gamma^2(e-2)}{1-\epsilon_t}\sum_{k=1}^K \left(\hg_k(t)\right)^2\\
\leq &1 + \gamma(1+4\epsilon_t)\sum_{k=1}^K p_k(t)\hg_k(t) + \gamma^2(e-2)(1+4\epsilon_t)\sum_{k=1}^K \left(\hg_k(t)\right)^2\quad\left(\text{due to~}\frac{1}{1-\epsilon_t}\leq 1+ 4\epsilon_t\right)\\
= &1 + \gamma(1+4\epsilon_t)\langle \bp_t,\bg_t\rangle + \gamma^2(e-2)(1+4\epsilon_t)\sum_{k=1}^K \left\|\hg_k(t)\right\|^2.
\end{align*}
Therefore we have
\begin{align*}
\ln\left(\frac{W(T+1)}{W(1)}\right)=& \sum_{t=1}^T\ln\left(\frac{W(t+1)}{W(t)}\right)\\
\leq &  \sum_{t=1}^T\ln\left(1 + \gamma(1+4\epsilon_t)\langle \bp_t,\bg_t\rangle + \gamma^2(e-2)(1+4\epsilon_t)\sum_{k=1}^K \left\|\hg_k(t)\right\|^2\right)\\
\leq & \sum_{t=1}^T\left(\gamma(1+4\epsilon_t)\langle \bp_t,\bg_t\rangle + \gamma^2(e-2)(1+4\epsilon_t)\sum_{k=1}^K \left\|\hg_k(t)\right\|^2\right),
\end{align*}
which gives us
\begin{align*}
\E\ln(W(T+1)) - \E\ln(W(1))\leq \gamma\sum_{t=1}^T(1+4\epsilon_t)\E\langle \bg_t,\bp_t\rangle + \gamma^2(e-2)\sum_{t=1}^T(1+4\epsilon_t)\E\|\bg_t\|^2.\numberthis\label{proof:regret_eq1}
\end{align*}
For $W(T+1)$, with any probability distribution $\bp^*= (p^*_1,\cdots,p^*_K)$, we have
\begin{align*}
\ln(W(T+1)) =& \ln\left(\sum_{k}\omega_k(T+1)  \right)\\
= & \sum_{j=1}^Kp^*_j\ln\left(\sum_{k}\omega_k(T+1)  \right)\\
\geq & \sum_{j=1}^Kp^*_j\ln\left(\omega_j(t)  \right)\\
= & \sum_{j=1}^Kp^*_j\ln\left(\exp\left(\gamma\sum_{t=1}^T \hg_j(t) \right)  \right)\\\
= & \sum_{j=1}^K\sum_{t=1}^T \gamma p^*_j \g_j(t),
\end{align*}
then after taking expectation,  we have
\begin{align*}
\E\ln(W(T+1)) \geq \E\left[\sum_{i=1}^K\sum_{t=1}^T \gamma p^*_i \E_t\g_i(t)\right]
=  \gamma\sum_{t=1}^T\E\langle \bm{p}^*,\bg_t\rangle.\numberthis\label{proof:regret_eq2}
\end{align*}
Combing \eqref{proof:regret_eq1} and \eqref{proof:regret_eq2}, we get
\begin{align*}
\gamma\sum_{t=1}^T(1+4\epsilon_t)\E\langle \bg_t,\bp_t\rangle \geq \gamma\sum_{t=1}^T\E \langle \bm{p}^*,\bg_t\rangle - \ln K - \gamma^2(e-2)\sum_{t=1}^T(1+4\epsilon_t)\E\|\bg_t\|^2.
\end{align*}
After rearrangement, the inequality above leads to
\begin{align*}
\sum_{t=1}^T\E\langle \bg_t,\bp_t\rangle \geq \sum_{t=1}^T\E\langle \bm{p}^*,\bg_t\rangle - \frac{\ln K}{\gamma} - 4\gamma(e - 2)\sum_{t=1}^T\E\|\bg_t\|^2 - 2\sum_{t=1}^T\epsilon_t\E\langle \bg_t,\bp_t\rangle.
\end{align*}
Therefore, we get
\begin{align*}
\sum_{t=1}^T\E\langle \bg_t,\bp^* - \bp_t\rangle \leq  \frac{\ln K}{\gamma} + 4\gamma(e - 2)\sum_{t=1}^T\E\|\bg_t\|^2 + 2\sum_{t=1}^T\epsilon_t\E\langle \bg_t,\bp_t\rangle.
\end{align*}
\end{proof}

\section{Proof to Theorem~\ref{main:theorem_convex}}
\begin{proof}
In this proof, we use $f_t(\bp)$ as a short notation for $f(\frac{1}{t} \sum_{s=1}^{t}\bmu_{s} \bp)$. 
Notice that $f_t(\bp_t) - f_t(\bp^*)$ can be bounded by $\langle \nabla f_t(\bp_t),\bp^* - \bp_t\rangle$ by the convexity assumption. So we seek to upper bound $\sum_{t=1}^T\E\langle \nabla f_t(\bp_t),\bp^* - \bp_t\rangle$ to proof Theorem~\ref{main:theorem_convex}. With Lemma~\ref{lemma:key}, we get
\begin{align*}
&\sum_{t=1}^T\E\langle \nabla f_t(\bp_t),\bp^* - \bp_t\rangle\\
\leq& \frac{\ln K}{\gamma} + 4\gamma(e - 2)\sum_{t=1}^T\E\|\bg_t\|^2 + 2\sum_{t=1}^T\epsilon_t\E\langle \bg_t,\bp_t\rangle + \sum_{t=1}^T\E\langle \nabla f_t(\bp_t)-\bg_t,\bp^* - \bp_t\rangle\\
\leq & \frac{\ln K}{\gamma} + 4\gamma(e - 2)\sum_{t=1}^T\E\|\bg_t\|^2 + 2\sum_{t=1}^T\epsilon_t\E\langle \bg_t,\bp_t\rangle + \sum_{t=1}^T4\E\| \nabla f_t(\bp_t)-\bg_t\|^2.
\end{align*}
The last inequality is true since $\langle a, b \rangle \leq \| a \|^{2} \|b \|^{2} $ and $\|\bp^{*} - \bp \| \leq 2$ for any probabilistic vectors $\bp \in \bar{S}$. 
Now by using Lemma~\ref{lemma:grad_bias}, we get
\begin{align*}
&\sum_{t=1}^T\E\langle \nabla f_t(\bp_t),\bp^* - \bp_t\rangle\\
\leq&\frac{\ln K}{\gamma} + 4\gamma(e - 2)\sum_{t=1}^T\E\|\bg_t\|^2 + 2\sum_{t=1}^T\epsilon_t\E\langle \bg_t,\bp_t\rangle + \sum_{t=1}^T\frac{8\left(G_f^2 + G_U^2L_f^2\right)G_U^2\left(\sum_{s=1}^t\frac{1}{\epsilon_s}\right)}{t^2}\\
\leq&\frac{\ln K}{\gamma} + 4\gamma(e - 2)G^2T + 2G\sum_{t=1}^T\epsilon_t + \sum_{t=1}^T\frac{8\left(G_f^2 + G_U^2L_f^2\right)G_U^2\left(\sum_{s=1}^t\frac{1}{\epsilon_s}\right)}{t^2}.
\end{align*}
Setting $\gamma=\frac{1}{\sqrt{T}}$ and $\epsilon_t = \frac{G}{\sqrt{t+1}}$, we can ensure that $\gamma\g_i(t)=\frac{\gamma \g_i(t)}{p_i(t)}\leq \frac{\gamma G}{\epsilon_t}\leq 1$, then the inequality above becomes
\begin{align*}
&\sum_{t=1}^T\E\left(f_t(\bp^*) - f_t(\bp_t)\right)\\
\leq&\sum_{t=1}^T\E\langle \nabla f_t(\bp_t),\bp^* - \bp_t\rangle\\
\leq & \sqrt{T}\left(\ln K + 4(e - 1)G^2 + 4G^2 +\frac{32\left(G_f^2 + G_U^2L_f^2\right)G_U^2}{G}\right).
\end{align*}

For the second part of Theorem~\ref{main:theorem_convex}, we first decompose the loss as 

\begin{align*}
&\sum_{t=1}^T f\left(\frac{1}{t}\sum_{s=1}^t \bm\mu_s\bp^{*} \right) - \sum_{t=1}^T\E\left[f\left(\frac{1}{t}\sum_{s=1}^t \hat{U}_s \bp_t  \right)\right] \\
\leq & \sum_{t} f\left(\frac{1}{t}\sum_{s} \bm\mu_s\bp^{*} \right) -\sum_t\E f\left(\frac{1}{t} \sum_{s}\bmu_{s} \bp_t \right) + \sum_t\E f\left(\frac{1}{t} \sum_{s}\bmu_{s} \bp_t \right) - \sum_{t=1}^T\E\left[f\left(\frac{1}{t}\sum_{s=1}^t \hat{U}_s \bp_t  \right)\right] \\
\leq &  {\sqrt{T}}C_1 + \sum_t\E \left[ f\left(\frac{1}{t} \sum_{s}\bmu_{s} \bp_t \right) - f\left(\frac{1}{t}\sum_{s} \hat{U}_s \bp_t  \right)\right],
\end{align*} 
where the first term is upper bounded based on Theorem~\ref{main:theorem_convex} and $C_1$ is the constant in the Theorem. To bound the second term, note that 
\begin{align*}
 \E  f(V_t \bp_t) - f(\hat{V}_t \bp_t) \leq & \E G\| \bp_t \|^{2} \| V_t - \hat{V}_t \|^{2}
 \leq  G G_u^{2} \left(\sum_{s} \frac{1}{\epsilon_s}\right)/ t^{2}
\end{align*}
and recall that $\epsilon_t = \frac{G}{\sqrt{t+1}}$,
we have 
\begin{align*}
    &\sum_{t=1}^T f\left(\frac{1}{t}\sum_{s=1}^t \bm\mu_s\bp^{*} \right) - \sum_{t=1}^T\E\left[f\left(\frac{1}{t}\sum_{s=1}^t \hat{U}_s \bp_t  \right)\right] \\
\leq  &  {\sqrt{T}}C_1  + \sum_{t=1}^{T} G_{U}^{2}\sqrt{t}/t^2 \\
\leq &  {\sqrt{T}} \left(C_1  + 2G_{U}^{2}\right).
\end{align*}
It completes the proof.
\end{proof}

\section{Proof to Theorem~\ref{main:theorem_convex_convergence}}
\begin{proof}
Notice that $f(\bp_t) - f_t(\bp^*)$ can be bounded by $\langle \nabla f(\bp_t),\bp^* - \bp_t\rangle$, so in order to upper bound $\sum_{t=1}^T\E\langle \nabla f(\bp_t),\bp^* - \bp_t\rangle$, by using Lemma~\ref{lemma:key}, we get
\begin{align*}
&\sum_{t=1}^T\E\langle \nabla f(\bp_t),\bp^* - \bp_t\rangle\\
\leq& \frac{\ln K}{\gamma} + 4\gamma(e - 2)\sum_{t=1}^T\E\|\bg_t\|^2 + 2\sum_{t=1}^T\epsilon_t\E\langle \bg_t,\bp_t\rangle + \sum_{t=1}^T\E\langle \nabla f(\bp_t)-\bg_t,\bp^* - \bp_t\rangle\\
\leq & \frac{\ln K}{\gamma} + 4\gamma(e - 2)\sum_{t=1}^T\E\|\bg_t\|^2 + 2\sum_{t=1}^T\epsilon_t\E\langle \bg_t,\bp_t\rangle + \sum_{t=1}^T4\E\| \nabla f(\bp_t)-\bg_t\|^2.
\end{align*}
Now by using Lemma~\ref{lemma:grad_bias_iid}, we get
\begin{align*}
&\sum_{t=1}^T\E\langle \nabla f(\bp_t),\bp^* - \bp_t\rangle\\
\leq&\frac{\ln K}{\gamma} + 4\gamma(e - 2)\sum_{t=1}^T\E\|\bg_t\|^2 + 2\sum_{t=1}^T\epsilon_t\E\langle \bg_t,\bp_t\rangle + \sum_{t=1}^T\frac{8\left(G_f^2 + G_U^2L_f^2\right)G_U^2\left(\sum_{s=1}^t\frac{1}{\epsilon_s}\right)}{t^2}\\
\leq&\frac{\ln K}{\gamma} + 4\gamma(e - 2)G^2T + 2G\sum_{t=1}^T\epsilon_t + \sum_{t=1}^T\frac{8\left(G_f^2 + G_U^2L_f^2\right)G_U^2\left(\sum_{s=1}^t\frac{1}{\epsilon_s}\right)}{t^2}.
\end{align*}
Setting $\gamma=\frac{1}{\sqrt{T}}$ and $\epsilon_t = \frac{G}{\sqrt{t+1}}$, we can ensure that $\gamma\g_i(t)=\frac{\gamma \g_i(t)}{p_i(t)}\leq \frac{\gamma G}{\epsilon_t}\leq 1$, then the inequality above leads to
\begin{align*}
&f(\bp^*) - \E f\left(\frac{1}{T}\sum_{t=1}^T\bp_t\right)\\
\leq&\frac{1}{T}\sum_{t=1}^T\E\langle \nabla f(\bp_t),\bp^* - \bp_t\rangle\\
\leq & \frac{1}{\sqrt{T}}\left(\ln K + 4(e - 1)G^2 + 4G^2 +\frac{32\left(G_f^2 + G_U^2L_f^2\right)G_U^2}{G}\right).
\end{align*}
It completes the proof.
\end{proof}

\end{document}